\newcommand{\vect}[1]{\mathbf{#1}}
\DeclareMathOperator*{\argmax}{arg\,max}
\newtheorem{thm}{Theorem}[]
\newtheorem{thmA}{Theorem}[]
\newtheorem{lem}{Lemma}[]
\newtheorem{cor}{Corollary}[]
\newtheorem{prop}{Proposition}[]
\newtheorem{defn}{Definition}[]
\newcommand{\PH}{\phantom{0}}
\newcommand{\cc}[1]{#1}
\title{Exploring the Vulnerability of Deep Neural Networks: \\ A Study of Parameter Corruption}
\author{
Xu Sun,\textsuperscript{\rm 1,2}{\thanks{Equal Contribution.}}
Zhiyuan Zhang,\textsuperscript{\rm 1}$^*$
Xuancheng Ren,\textsuperscript{\rm 1}
Ruixuan Luo,\textsuperscript{\rm 2}
Liangyou Li\textsuperscript{\rm 3}\\
}
\begin{document}

\maketitle

\begin{abstract}
We argue that the vulnerability of model parameters is of crucial value to the study of model robustness and generalization but little research has been devoted to understanding this matter.
In this work, we propose an indicator to measure the robustness of neural network parameters by exploiting their vulnerability via parameter corruption. The proposed indicator describes the maximum loss variation in the non-trivial worst-case scenario under parameter corruption. For practical purposes, we give a gradient-based estimation, which is far more effective than random corruption trials that can hardly induce the worst accuracy degradation. Equipped with theoretical support and empirical validation, we are able to systematically investigate the robustness of different model parameters and reveal vulnerability of deep neural networks that has been rarely paid attention to before. Moreover, we can enhance the models accordingly with the proposed adversarial corruption-resistant training, which not only improves the parameter robustness but also translates into accuracy elevation.
\end{abstract}

\section{Introduction}

Despite the promising performance of Deep neural networks (DNNs), research has discovered that DNNs are vulnerable to adversarial examples, i.e., simple perturbations to input data can mislead models~\citep{DBLP:journals/corr/GoodfellowSS14,DBLP:conf/iclr/KurakinGB17a,DBLP:conf/iclr/MadryMSTV18}. 
These findings concern the vulnerability of DNNs against input data. However, the vulnerability of DNNs does not only exhibit in input data. As functions of both input data and model parameters, the parameters of neural networks are a source of vulnerability of equal importance. For neural networks deployed on electronic computers, parameter attacks can be conducted in the form of training data poisoning~\citep{backdoor1,backdoor2,backdoors3}, bit flipping~\citep{TBT}, compression~\citep{Stronger} or quantization~\citep{Data-Free-Quant,model_robustness}. For neural networks deployed in physical devices, advances in hardware neural networks \cite{optical-NN, misra2010artificial,abdelsalam2018efficient,salimi2019digital,weber2019amplifier,bui2019scalable} also call for study in parameter vulnerability because of hardware deterioration and background noise, which can be seen as parameter corruption. More importantly, study on parameter vulnerability can deepen our understanding of various mechanisms in neural networks, inspiring innovation in architecture design and training paradigm.

%%%%%%%%%%%%%%%%%%%%%%%%%%%%%%%%%%%%%%%%%%%%%%
\begin{figure}[t]
\centering
\includegraphics[height=8\baselineskip]{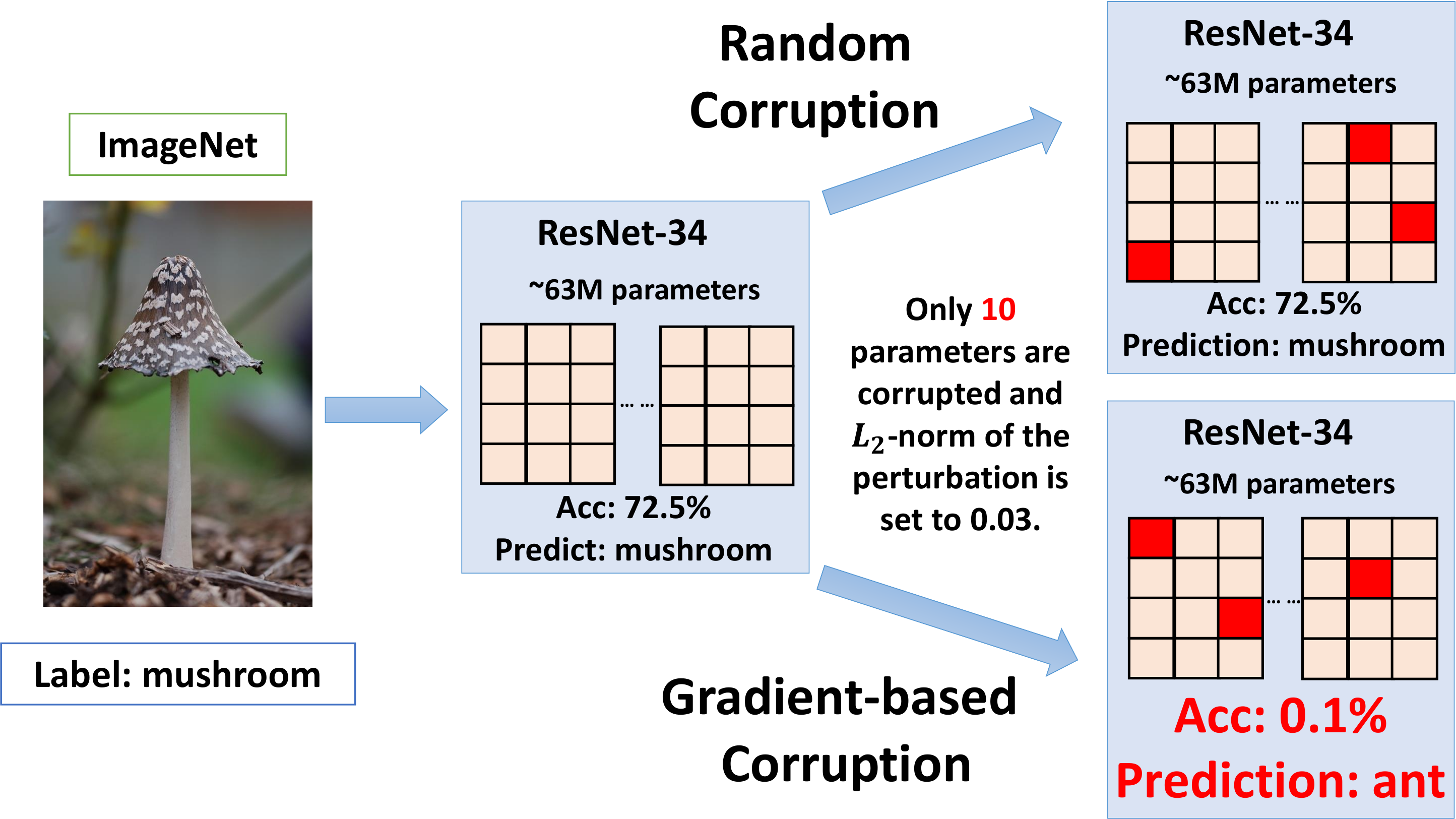}
\caption{Parameter corruptions with ResNet-34 on ImageNet. It shows that deep neural networks are robust to random corruptions, but the accuracy can drop significantly in the worst case suggested by the gradient-based method. {The accuracy is measured on the development set.}}
\label{fig:error}
\end{figure}
%%%%%%%%%%%%%%%%%%%%%%%%%%%%%%%%%%%%%%%%%%%%%%%%%%%%%%%%%%%%%%%%%%

To probe the vulnerability of neural network parameters and evaluate the \textit{parameter robustness},
%To evaluate the robustness of the neural network parameters 
%and prepare the models in case of parameter corruption, 
we propose an indicator that measures the maximum loss change caused by small perturbations on model parameters in the non-trivial worst-cased scenario. The perturbations can be seen as artificial parameter corruptions. We give an infinitesimal gradient-based estimation of the indicator that is efficient for practical purposes compared with random corruption trials, which can hardly induce optimal loss degradation. Our theoretical and empirical results both validate the effectiveness of the proposed gradient-based method. As shown in Figure~\ref{fig:error}, model parameters are generally resistant to random corruptions but the worst outlook can be quite bleak suggested by the gradient-based corruption result. 

Intuitively, the indicator shows the maximum altitude ascent within a certain distance of the current parameter on the loss surface, as illustrated conceptually in Figure~\ref{fig:loss_curve}. {The traditional learning algorithms focus on obtaining lower loss, which means generally the parameters at point B are preferred. However, the local geometry of the landscape also indicates the generalization performance of the learning algorithm}~\citep{sharp-local1,sharp-local2}. {The parameters at point B demonstrate critical vulnerability to parameter corruptions, while the parameters at point A are a better choice since larger perturbations are required to render significant loss change. It is also  observed in our experiments that the parameters at point A have better generalization performance as a result of corruption-resistance. 
%It is also possible that the parameters at point A have better generalization performance as a result of corruption-resistance, which is observed in our experiments. 
% that parameter robustness is positively related to generalization ability, suggesting that a flatter neighborhood on the loss surface is in favour of generalization.
} %since it is more tolerant to local perturbations. 

%%%%%%%%%%%%%%%%%%%%%%%%%%%%%%%%%%%%%%%%%%%%%%
\begin{figure}[t]
\centering
\includegraphics[height=8\baselineskip]{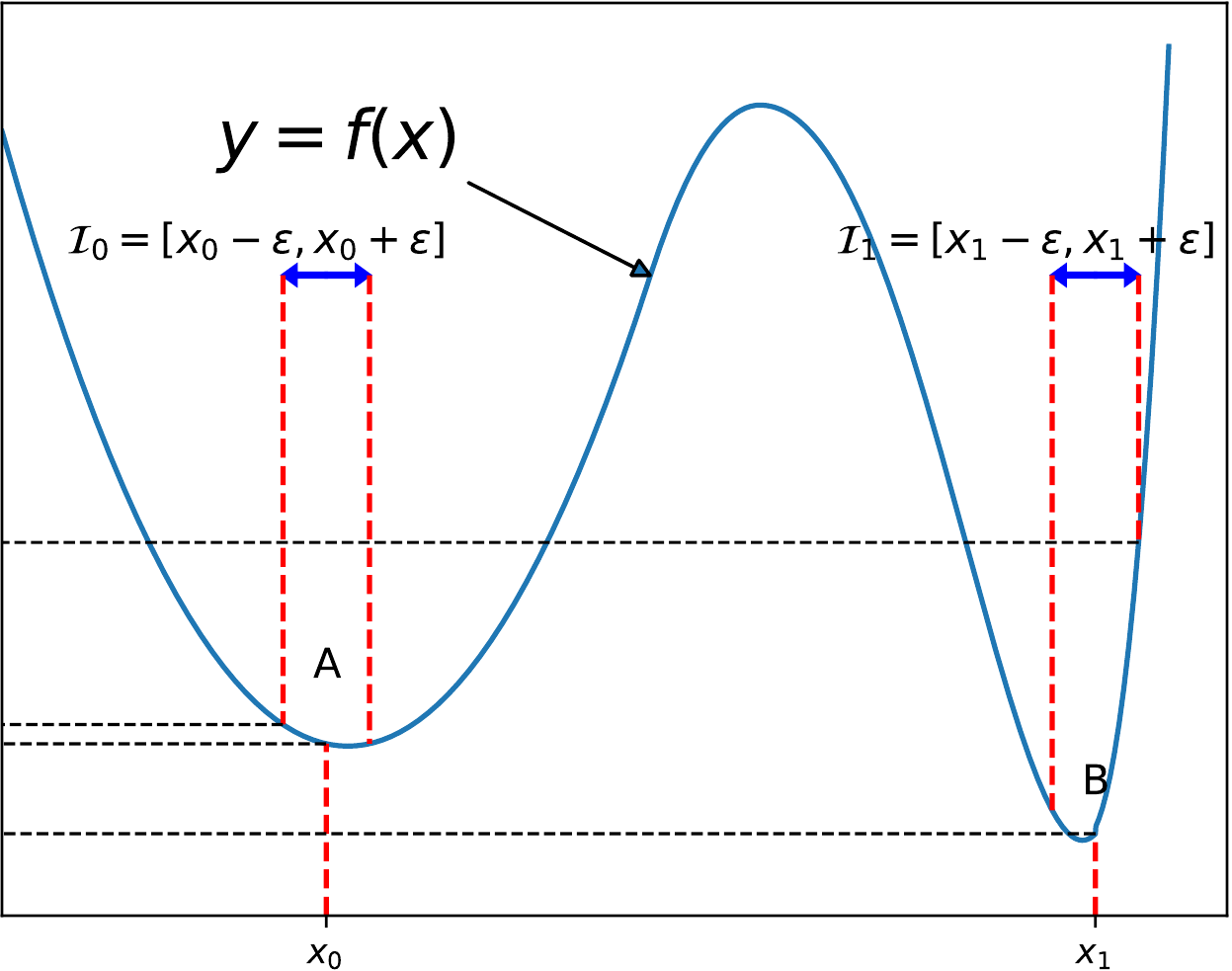}
\caption{In this illustration of the loss function, traditional optimizer prefers $B$ with the lower loss rather than $A$, because $B$ has the lower loss. However, parameters at point $B$ are more vulnerable to parameter corruption, as $\max_{x\in\mathcal{I}_0}(f(x)-f(x_0))< \max_{x\in\mathcal{I}_1}(f(x)-f(x_1))$. Based on our experiments, we argue that parameters that are resistant to corruption, e.g., at point $A$, can embody potentially better robustness and generalization.}
\label{fig:loss_curve}
\end{figure}

Equipped with the proposed indicator, we are able to systematically analyze the parameter robustness and probe the vulnerability of different components in a deep neural network via observing the accuracy degradation after applying  corruptions to its parameters. %Extensive experiments are conducted over representative deep neural network architecture in terms of the structural characteristics of the parameters. 
%Results show that different parameter types and positions exhibit divergent vulnerability. In particular, the normalization layers are the least parameter robust, in contrast to their effect in stabilizing the data flow.
%
Furthermore, the comparisons between the gradient-based and the random corruption for estimating the indicator suggest that the neighborhood of the learned parameters on the loss surface is generally flattish except for certain steep directions. If we can push the parameters away from the steep directions, the robustness of the parameters can be improved significantly. Therefore, we propose to conduct adversarial corruption-resistant training that incorporates virtual parameter corruptions to find parameters without steep directions in the neighborhood. Experimental results show that the proposed method not only improves the parameter robustness of deep neural networks but also elevates their accuracy in application tasks.

Our main contributions are as follows:
\begin{itemize}

\item To understand the parameter vulnerability of deep neural networks, which is fundamentally related to model robustness and generalization, we propose an indicator that measures the maximum loss change if small perturbations are applied on parameters, i.e., parameter corruptions. The proposed gradient-based estimation is far more effective in exposing the vulnerability than random corruption trials, validated by both theoretical and empirical results.

\item The indicator is used to probe the vulnerability of different kinds of parameters with diverse structural characteristics in a trained neural network. Through systematic analyses of representative architectures, we summarize divergent vulnerability of neural network parameters, especially bringing attention to normalization layers.

\item To improve the robustness of the models with respect to parameters, we propose to enhance the training of deep neural networks by taking the parameter vulnerability into account and introduce the adversarial corruption-resistant training that can improve the accuracy and the generalization performance of deep neural networks.

\end{itemize}

%%%%%%%%%%%%%%%%%%%%%%%%%%%%%%%%%%%%%%%%%%%%%%%%%%%%%%%%%%%%%%
\begin{figure*}[!t]
\begin{minipage}[!t]{0.48\textwidth}
\begin{algorithm}[H]
\small
   \caption{Random Corruption}
   \label{alg:random}
\begin{algorithmic}[1]
   \REQUIRE Parameter vector $\vect{w}\in\mathbb{R}^k$, set of corruption constraints $S$
    \STATE Sample $\vect{r}\sim N(0, 1)$
    \STATE Solve the random corruption $\vect{\tilde a}$ according to Eq.(\ref{equ:random})
    \STATE Update the parameter vector $\vect{w}\gets\vect{w}+\vect{\tilde a}$
\end{algorithmic}
\end{algorithm}
\end{minipage}
\hfill
\begin{minipage}[!t]{0.48\textwidth}
\begin{algorithm}[H]
\small
\caption{Gradient-Based Corruption}
   \label{alg:gradient}
\begin{algorithmic}[1]
    \REQUIRE Parameter vector $\vect{w}\in\mathbb{R}^k$, set of corruption constraints  $S$, loss function $\mathcal{L}$ and dataset $\mathcal{D}$
    % \STATE (1): Get the parameter vector $\vect{w}\in\mathbb{R}^k$, the set of allowed attacks $S$, the loss function $\mathcal{L}$ and the dataset $\mathcal{D}$ from input
    \STATE Obtain the gradient $\vect{g}\gets \nicefrac{\partial \mathcal{L}(\vect{w};\mathcal{D})}{\partial \vect{w}}$
    \STATE Solve the corruption $\vect{\hat a}$ in Eq.(\ref{equ:proposed}) with $S$ and $\vect{g}$
    \STATE Update the parameter vector $\vect{w}\gets\vect{w}+\vect{\hat a}$
\end{algorithmic}
\end{algorithm}
\end{minipage}
%\vskip -0.15 in
\end{figure*}
%%%%%%%%%%%%%%%%%%%%%%%%%%%%%%%%%%%%%%%%%%%%%%%%%%%%%%%%%%%

\section{Parameter Corruption}
\label{sec:method}

In this section, we introduce the problem of parameter corruption and the proposed indicator. Then, we describe the Monte-Carlo estimation and the gradient-based estimation of the indicator backed with theoretical support.

% This section is organized as follows: we first give a formal definition of parameter corruption. Then we formulate and analyze the random parameter corruptions to show that random parameter corruptions can hardly find the optimal corruption, which reveals the upper bound of loss change. Finally, we propose the gradient-based parameter corruptions and discuss the loss change of gradient-based parameter corruptions to show that our proposed gradient-based parameter corruptions can approximately achieve the optimal loss change.

%\subsection{Parameter Corruption}

Before delving into the specifics, we first introduce our notations. Let $\mathcal{N}$ denote a neural network, $\vect{w}\in\mathbb{R}^k$ denote a $k$-dimensional subspace of its parameter space, and $\mathcal{L}(\vect{w}; \mathcal{D})$ denote the loss function of $\mathcal{N}$ on the dataset $\mathcal{D}$, regarding to the specific parameter subspace $\vect{w}$. Taking a $k$-dimensional subspace allows a more general analysis on a specific group of parameters.

{To expose the vulnerability of model parameters, we propose to adopt the approach of parameter corruption. %Parameter corruption can be seen as perturbations to parameters that impair the ability of deep neural networks. It can reflect component deterioration or background noise in hardware neural networks. It is also of theoretical and practical interest that parameter corruption can exploit parameter vulnerability and thus evaluate the robustness of model parameters, based on which deep neural networks can be analyzed and improved.
} 
%Parameter corruption is parameter change after a deep neural network has been trained, which can happen to hardware neural networks due to hardware deterioration or background noise. 
To formally analyze its effect on neural networks and eliminate trivial corruption, we formulate the parameter corruption as a small perturbation $\vect{a}\in\mathbb{R}^k$ to the parameter vector $\vect{w}$. The corrupted parameter vector becomes $\vect{w}+\vect{a}$.  The small perturbation requirement is realized as a constraint set of the parameter corruptions.
%%%%%%%%%%%%%%%%%%%%%%%%%%%%%%%%%%%%%%%%%%%%%%%%%%%%%%%%%%%%%%%%%%%%%%%%%%%%%%%%%%%%%%%%
\begin{defn}[Corruption Constraint]
The corruption constraint is specified by the set
\begin{equation}
S=\{\vect{a}:\|\vect{a}\|_p= \epsilon \text{ and }\|\vect{a}\|_0\le n\},
\end{equation}
where $\|\cdot\|_0$ denotes the number of non-zero elements in a vector and $1\le n\le k$ denotes the maximum number of corrupted parameters. $\epsilon$ is a small positive real number and $\|\cdot\|_p$ denotes the $L_p$-norm where $p\ge 1$ such that $\|\cdot\|_p$ is a valid distance in Euclidean geometry.
\end{defn}
%%%%%%%%%%%%%%%%%%%%%%%%%%%%%%%%%%%%%%%%%%%%%%%%%%%%%%%%%%%%%%%%%%%%%%%%%%%%%%%%%%%%%%%%
For example, the set $S=\{\vect{a}:\|\vect{a}\|_2= \epsilon\}$ specifies that the corruption should be on a hypersphere with a radius of $\epsilon$ and no limit on the number of corrupted parameters.

Suppose $\Delta\mathcal{L}(\vect{w}, \vect{a}; \mathcal{D})=\mathcal{L}(\vect{w}+\vect{a}; \mathcal{D})-\mathcal{L}(\vect{w}; \mathcal{D})$ denotes the loss change. To evaluate the effect of parameter corruption, it is most reasonable to consider the worst-case scenario and thus, we propose the indicator as the maximum loss change under the corruption constraints. The optimal parameter corruption is defined accordingly.

\begin{defn}[Indicator and Optimal Parameter Corruption]
The indicator $\Delta_\text{max}\mathcal{L}(\vect{w}, S, \mathcal{D})$ and the optimal parameter corruption $\vect{a}^*$ are defined as:
\begin{align}
\Delta_\text{max}\mathcal{L}(\vect{w}, S, \mathcal{D})=\max\limits_{\vect{a}\in S}\Delta\mathcal{L}(\vect{w}, \vect{a}, \mathcal{D}) 
 ,\\\vect{a}^*=\argmax_{\vect{a}\in S}\Delta\mathcal{L}(\vect{w}, \vect{a}, \mathcal{D}).
\end{align}
\end{defn}

{Let $\vect{g}$ denote $\nicefrac{\partial\mathcal{L}(\vect{w}; \mathcal{D})}{\partial \vect{w}}$ and $\textbf{H}$ denote the Hessian matrix; suppose $\|\vect{g}\|_2=G>0$. Using the second-order Taylor expansion, we estimate the loss change and the indicator}:
\begin{equation}
\Delta\mathcal{L}(\vect{w},\vect{a}; \mathcal{D}) = \vect{a}^\text{T}\vect{g}+\frac{1}{2}\vect{a}^\text{T}\textbf{H}\vect{a}+o(\epsilon^2)=f(\vect{a})+o(\epsilon).
\end{equation}
{Here, $f(\vect{a})=\vect{a}^\text{T}\vect{g}$ is a first-order estimation of $\Delta\mathcal{L}(\vect{w}, \vect{a}; \mathcal{D})$ and meanwhile the inner product of the parameter corruption $\vect{a}$ and the gradient $\vect{g}$, based on which we maximize the alternative inner
product instead of initial loss function to estimate the indicator.}

We provide and analyze two methods to understand the effect of parameter corruption, which estimate the value of the indicator based on constructive, artificial, theoretical parameter corruptions. Comparing the two methods, the random parameter corruption gives a Monte-Carlo estimation of the indicator and the gradient-based parameter corruption gives an infinitesimal estimation that can effectively capture the worst case.
%We provide and analyze two approaches to parameter corruption, i.e., a Monte-Carlo corruption approach and a gradient-based corruption approach that can approximately achieve the optimal loss change. 
Please refer to Appendix for detailed proofs of propositions and theorems.  

\subsection{Random Corruption}
\label{sec:random}

We first analyze the random case.
As we know, randomly sampling a perturbation vector $\vect{a}$  does not necessarily conform to the constraint set $S$ and it is complex to generate corruption uniformly distributed in $S$ as the generation is determined by the shape of $S$ and is not universal enough. To eliminate the problem, we define the random parameter corruptions used in this estimation as maximizing an alternative inner product $\vect{a}^\text{T}\vect{r}$ under the constraint, based on a random vector $\vect{r}$ instead of the gradient $\vect{g}$ to ensure the randomness.

\begin{defn}[Random Parameter Corruption and Monte-Carlo Estimation]
\label{def:1}
Given a randomly sampled vector $\vect{r}\sim N(0, 1)$, a valid random corruption $\vect{\tilde a}$ for a Monte-Carlo estimation of the indicator in the constraint set $S$, which has a closed-form solution, is
\begin{equation}
\vect{\tilde a}=\argmax_{\vect{a}\in S}\vect{a}^\text{T}\vect{r}=\epsilon \left(\text{sgn}(\vect{h})\odot\frac{|\vect{h}|^\frac{1}{p-1}}{\||\vect{h}|^\frac{1}{p-1}\|_p}\right)
\label{equ:random}
\end{equation}
where $\vect{h}=\text{top}_n(\vect{r})$. The $\text{top}_n(\vect{v})$ function retains top-$n$ magnitude of all $|\vect{v}|$ dimensions and set other dimensions to $0$, $\text{sgn}(\cdot)$ denotes the signum function, $|\cdot|$ denotes the point-wise absolute function, and $(\cdot)^\alpha$ denotes the point-wise $\alpha$-power function.
%follows Proposition~\ref{prop:linear}  with $\vect{v}$ realized by $\vect{r}$. 
The loss change with the random corruption is a Monte-Carlo estimation of the indicator.
\end{defn}

The procedure to derive the random corruption vector under the Monte-Carlo estimation of the indicator is shown in Algorithm~\ref{alg:random}. The correctness and randomness of the resulting corruption vector are assured and the theoretical results are given in Appendix. Without losing generality, we discuss the characteristics of the loss change caused by random corruption under a representative corruption constraint in Theorem~\ref{thm:random}. The proof and further analysis are in Appendix.

%%%%%%%%%%%%%%%%%%%%%%%%%%%%%%%%%%%%%%%%%%%%%%%%%%%%%%%%%%%%%%%%%%%%%%%%%%%%%%%%%%%%%%%%
\begin{thm}[Distribution of Random Corruption]
\label{thm:random}
Given the constraint set  $S=\{\vect{a}:\|\vect{a}\|_2= \epsilon\}$ and a generated random corruption $\vect{\tilde a}$ by Eq.(\ref{equ:random}), which in turn obeys a uniform distribution on $\|\vect{\tilde a}\|_2=\epsilon$. The first-order estimation of $\Delta_\text{max}\mathcal{L}(\vect{w}, S, \mathcal{D})$ and the expectation of the loss change caused by random corruption is
\begin{align}
\Delta_\text{max}\mathcal{L}(\vect{w}, S, \mathcal{D}) &= \epsilon G+o(\epsilon);\\
\mathbb{E}_{\|\vect{\tilde a}\|_2 = \epsilon}[\Delta\mathcal{L}(\vect{w}, \vect{\tilde a}; \mathcal{D})]&=O\left(\frac{tr(\textbf{H})}{k}\epsilon^2\right).
\end{align}

Define $\eta=\nicefrac{|\vect{\tilde a}^\text{T}\vect{g}|}{\epsilon G}$ and $\eta\in [0, 1]$, which is an estimation of  $\nicefrac{|\Delta\mathcal{L}(\vect{w}, \vect{\tilde a}, \mathcal{D})| }{\Delta_\text{max}\mathcal{L}(\vect{w}, S, \mathcal{D})}$, then the probability density function $p_\eta(x)$ of $\eta$ and the cumulative density $P(\eta \le x)$ function of $\eta$ are
\begin{align}
p_{\eta}(x)&=\frac{2\Gamma(\frac{k}{2})}{\sqrt{\pi}\Gamma(\frac{k-1}{2})}(1-x^2)^{\frac{k-3}{2}};  \label{equ:random_destiny1}\\
P(\eta \le x)&=\frac{2xF_1(\frac{1}{2}, \frac{3-k}{2};\frac{3}{2}; x^2)}{B(\frac{k-1}{2}, \frac{1}{2})};
\end{align}
where $k$ denotes the number of corrupted parameters and $\Gamma(\cdot)$, $B(\cdot, \cdot)$ and $F_1(\cdot, \cdot;  \cdot; \cdot)$ denote the gamma function, beta function and hyper-geometric function, respectively.
\end{thm}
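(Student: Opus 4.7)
The plan is to reduce everything to the fact that for the constraint $S=\{\vect{a}:\|\vect{a}\|_2=\epsilon\}$ (implicitly with $n=k$), the formula in Eq.(\ref{equ:random}) collapses to $\vect{\tilde a}=\epsilon\,\vect{r}/\|\vect{r}\|_2$. Because each coordinate of $\vect{r}$ is i.i.d.\ standard normal, $\vect{r}$ is spherically symmetric, so $\vect{r}/\|\vect{r}\|_2$ is uniformly distributed on the unit sphere $S^{k-1}$. Hence $\vect{\tilde a}$ is uniform on $\{\vect{a}:\|\vect{a}\|_2=\epsilon\}$, which establishes the distributional claim made in the theorem's preamble.

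For the first-order estimate of the indicator, I would start from the Taylor expansion $\Delta\mathcal{L}(\vect{w},\vect{a};\mathcal{D})=\vect{a}^\text{T}\vect{g}+O(\epsilon^2)$ already derived in the paper and maximize the linear term over the sphere by Cauchy--Schwarz. The maximum $\epsilon G$ is attained at $\vect{a}=\epsilon\vect{g}/G$, and folding the quadratic remainder into the error term gives $\Delta_\text{max}\mathcal{L}=\epsilon G+o(\epsilon)$. For the expected loss change I would use the same expansion and take expectation termwise: the linear term $\mathbb{E}[\vect{\tilde a}^\text{T}\vect{g}]$ vanishes by the central symmetry $\vect{\tilde a}\stackrel{d}{=}-\vect{\tilde a}$ of the uniform distribution on the sphere, while the standard isotropy identity $\mathbb{E}[\vect{\tilde a}\vect{\tilde a}^\text{T}]=(\epsilon^2/k)I_k$ yields $\tfrac{1}{2}\mathbb{E}[\vect{\tilde a}^\text{T}\textbf{H}\vect{\tilde a}]=\tfrac{\epsilon^2}{2k}\,\text{tr}(\textbf{H})$, matching the stated $O(\text{tr}(\textbf{H})\epsilon^2/k)$ bound.

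For the distribution of $\eta=|\vect{\tilde a}^\text{T}\vect{g}|/(\epsilon G)$, I would rotate coordinates so that $\vect{g}$ is aligned with the first basis vector; then $\eta=|\vect{u}_1|$, where $\vect{u}=\vect{\tilde a}/\epsilon$ is uniform on $S^{k-1}$. The marginal density of a single coordinate of a uniform point on $S^{k-1}$ is the classical formula $\tfrac{\Gamma(k/2)}{\sqrt{\pi}\,\Gamma((k-1)/2)}(1-x^2)^{(k-3)/2}$ on $[-1,1]$, which can be derived either from the surface-area formula for spherical caps or by computing the Jacobian of the map $(\vect{u}_1,\ldots,\vect{u}_{k-1})\mapsto\vect{u}$. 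Folding to $[0,1]$ via the absolute value doubles the density and gives Eq.(\ref{equ:random_destiny1}).

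To pass from the density to the CDF, I would integrate $\int_0^x(1-t^2)^{(k-3)/2}dt$ by substituting $u=t^2$ to obtain an incomplete Beta integral, then invoke Euler's integral representation of the Gauss hypergeometric function to rewrite it as $x\,F_1(\tfrac{1}{2},\tfrac{3-k}{2};\tfrac{3}{2};x^2)$. The main obstacle I expect is this last step: carefully matching normalizing constants, using $1/\sqrt{\pi}=1/\Gamma(1/2)$, and reshaping $\Gamma(k/2)/\Gamma((k-1)/2)$ into $1/B((k-1)/2,1/2)$ so that the prefactor $2x/B((k-1)/2,1/2)$ emerges exactly as claimed. Everything else reduces to spherical symmetry and the second-order Taylor expansion already supplied in the text.
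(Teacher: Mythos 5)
Your proposal is correct and follows essentially the same route as the paper: rotational invariance of the Gaussian gives uniformity of $\vect{\tilde a}$ on the sphere, termwise expectation of the second-order Taylor expansion (with $\mathbb{E}[a_i a_j]=0$ for $i\ne j$ and $\mathbb{E}[a_i^2]=\epsilon^2/k$) gives the $O\left(\frac{tr(\textbf{H})}{k}\epsilon^2\right)$ bound, and aligning $\vect{g}$ with the first coordinate axis reduces $\eta$ to $|u_1|$ for $u$ uniform on the unit sphere. The only cosmetic difference is the order of the last step --- the paper computes the CDF first as a ratio of surface integrals in spherical coordinates and then differentiates to get the density, while you propose to take the classical marginal density first and integrate to reach the hypergeometric form --- but both land on the same identities and normalizing constants.
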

%%%%%%%%%%%%%%%%%%%%%%%%%%%%%%%%%%%%%%%%%%%%%%%%%%%%%%%%%%%%%%%%%%%%%%%%%%%%%%%%%%%%%%%%

Theorem~\ref{thm:random} states that the expectation of loss change of random corruption is an infinitesimal of higher order compared to $\Delta_\text{max}\mathcal{L}(\vect{w}, S, \mathcal{D})$ when $\epsilon\to 0$. In addition, it is unlikely for multiple random trials to induce the optimal loss change corresponding to the indicator. For a deep neural network, the number of corrupted parameters can be considerably large. According to Eq.(\ref{equ:random_destiny1}), $\eta$ will be concentrated near $0$. Thus, theoretically, it is not generally possible for the random corruption to cause substantial loss changes in this circumstance, making it ineffective in finding vulnerability.

% The theoretical results of the Monte-Carlo method including its randomness are stated in Appendix with a special case. The main results are that it is hard for the Monte-Carlo method to estimate the value of the indicator without a considerable number of trials, making it weak estimation of the proposed parameter robustness indicator. 

%%%%%%%%%%%%%%%%%%%%%%%%%%%%%%%%%%%%%%%%%%%%%%%%%%%%%%%%%%%%%%%%%%%%%%%%%%%%%%%%%%%%%%%%%%%%%%%%%%%%%%%%%%%%%%%%%%%%%%%%%%%%%%%
\begin{table*}[ht]
%\vspace{-\baselineskip}
%\vspace{0.1in}

\centering

\setlength{\tabcolsep}{2.4pt}
\scriptsize
\begin{tabular}{@{}lcccccccccc@{}}
\toprule

  \bf Dataset & \multicolumn{2}{c}{\textbf{ImageNet (Acc $\uparrow$)} } &  \multicolumn{2}{c}{\textbf{CIFAR-10 (Acc $\uparrow$)}} & \multicolumn{2}{c}{\textbf{PTB-LM (Log PPL $\downarrow$)}} & \multicolumn{2}{c}{\textbf{PTB-Parsing (UAS $\uparrow$)} } & \multicolumn{2}{c}{\textbf{De-En (BLEU $\uparrow$)} } \\
 \midrule
  Base model & \multicolumn{4}{c}{{ResNet-34}} & \multicolumn{2}{c}{{LSTM}} & \multicolumn{2}{c}{{MLP}} & \multicolumn{2}{c}{{Transformer}} \\
 \midrule

 w/o corruption & \multicolumn{2}{c}{72.5 $\star$} & \multicolumn{2}{c}{94.3 $\star$} & \multicolumn{2}{c}{4.25 $\star$} & \multicolumn{2}{c}{87.3 $\star$} & \multicolumn{2}{c}{35.33 $\star$} \\

 \midrule
  Approach & Random & Proposed & Random & Proposed & Random & Proposed & Random & Proposed & Random & Proposed\\

\midrule
 {$n$=$k$, $\epsilon$=10\textsuperscript{-4}} & $\star$ & 62.2 (-10.3) & $\star$ & 93.3 (-1.0)\PH  & $\star$ & $\star$ & $\star$ & $\star$ & $\star$ & 35.21 (-0.12)\PH \\
 {$n$=$k$, $\epsilon$=10\textsuperscript{-3}} &  $\star$ & 22.2 (-50.3)  & $\star$  & 36.1 (-58.2)& $\star$ & $\star$ & $\star$ & 80.6 (-6.7)\PH &  $\star$ & 33.62 (-1.71)\PH  \\
 {$n$=$k$, $\epsilon$=10\textsuperscript{-2}} & 30.3 (-42.2) & \PH 0.1 (-72.4) & 75.1 (-19.2) &  10.0 (-84.3) & $\star$  & \PH 4.52 (+0.27)\PH & 79.8 (-7.5)\PH  & \PH 6.1 (-81.2) & 34.82 (-0.51)\PH & \PH 0.17 (-35.16)  \\
 {$n$=$k$, $\epsilon$=10\textsuperscript{-1}} & \PH 0.1 (-72.4) & \PH 0.1 (-72.4) & 10.0 (-84.3) & 10.0 (-84.3) & \PH 4.43 (+0.18)\PH  & 13.25 (+9.00)\PH  & \PH 0.0 (-87.3) & \PH 0.0 (-87.3) & \PH 0.00 (-35.33) & \PH 0.00 (-35.33) \\
 {$n$=$k$, $\epsilon$=1} & \PH 0.1 (-72.4) & \PH 0.1 (-72.4) & 10.0 (-84.3) & 10.0 (-84.3) & 32.21 (+27.96) & 48.92 (+44.67)  & \PH 0.0 (-87.3) & \PH 0.0 (-87.3) & \PH 0.00 (-35.33) & \PH 0.00 (-35.33) \\
 \midrule
 {$n$=100, $\epsilon$=10\textsuperscript{-2}} & $\star$ & $\star$ & $\star$ & $\star$ & $\star$ & $\star$  & $\star$ & 64.6 (-22.7) & $\star$ & $\star$ \\
 {$n$=100, $\epsilon$=10\textsuperscript{-1}} & $\star$ & 67.5 (-5.0)\PH & $\star$ & $\star$ & $\star$ & $\star$ &  $\star$ &  11.0 (-76.3) & $\star$ & 31.68 (-3.65)\PH   \\
 {$n$=100, $\epsilon$=1} & $\star$  & \PH 0.1 (-72.4) & $\star$ & $\star$ & $\star$ & $\star$ & 87.1 (-0.2)\PH & \PH 0.0 (-87.3) & 35.25 (-0.08)\PH & \PH 0.00 (-35.33)  \\
 {$n$=100, $\epsilon$=10\textsuperscript{1}} &\PH 0.2 (-72.3) & \PH 0.1 (-72.4) &  77.1 (-17.2) & 44.8 (-49.5) & $\star$ & $\star$ & 31.9 (-55.4) & \PH 0.0 (-87.3) & 11.58 (-23.75) & \PH 0.00 (-35.33)  \\
 {$n$=100, $\epsilon$=10\textsuperscript{2}} & \PH 0.1 (-72.4) & \PH 0.1 (-72.4) & 10.1 (-84.2) & \PH 9.6 (-84.7)  & 16.90 (+12.65) & 23.55 (+19.30) & \PH 0.0 (-87.3) & \PH 0.0 (-87.3) & \PH 0.00 (-35.33) & \PH 0.00 (-35.33) \\
\bottomrule
\end{tabular}
\caption{Comparisons of gradient-based corruption and random corruption under the corruption constraint ($L_{+\infty}$), with further study on the number $n$ of parameters to be corrupted. {Here, all parameters can be corrupted, that is, $k$ stands for the total number of model parameters and $n=k$ means the number of changed parameters is not limited.} $\uparrow$ means the higher value the better accuracy and $\downarrow$ means the opposite. $\star$ denotes original scores without parameter corruption and scores close to the original score (difference less than $0.1$).}
\label{tab:rand-results}
%\vskip -0.05in
\end{table*}
%%%%%%%%%%%%%%%%%%%%%%%%%%%%%%%%%%%%%%%%%%%%%%%%%%%%%%%%%%%%%%%%%%%%%%%%%%%%%%%%%%%%%%%%%%%%%%%%%%%%%%%%%%%%%%%%%%%%%%%%%%%%%%%

%%%%%%%%%%%%%%%%%%%%%%%%%%%%%%%%%%%%%%%%%%%%%%%%%%%%%%%%%%%%%%%%%%%%%%%%%%%%%%%%%%%%%%%%%%%%%%%%%%%%%%%%%%%%%%%%%%%%%%%%%%%%%%%
\begin{figure*}[ht]
\centering

\subcaptionbox{ResNet-34 ($L_2$)}{\includegraphics[width=0.22\linewidth]{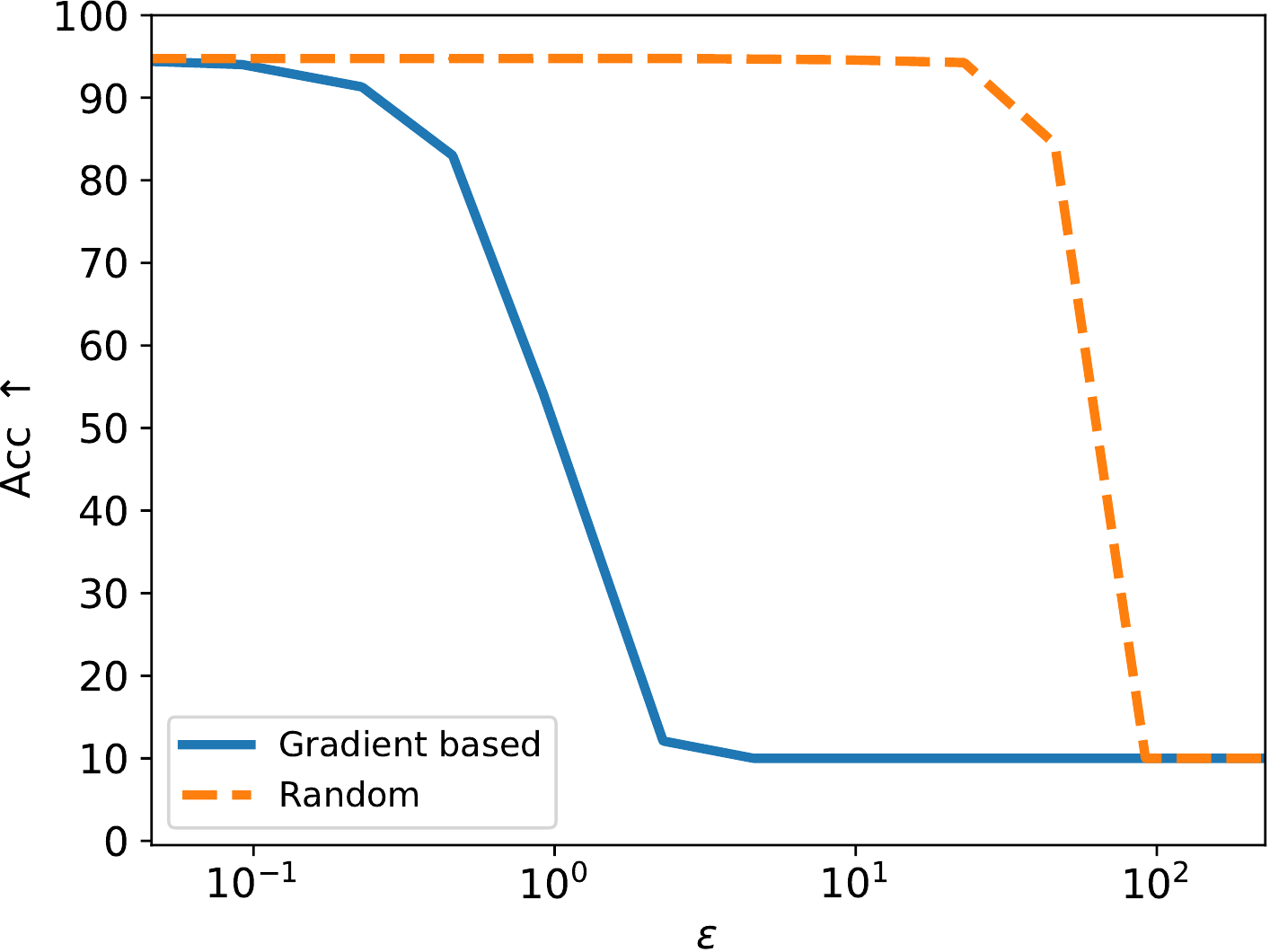}}
\hfil
\subcaptionbox{Transformer ($L_2$)}{\includegraphics[width=0.22\linewidth]{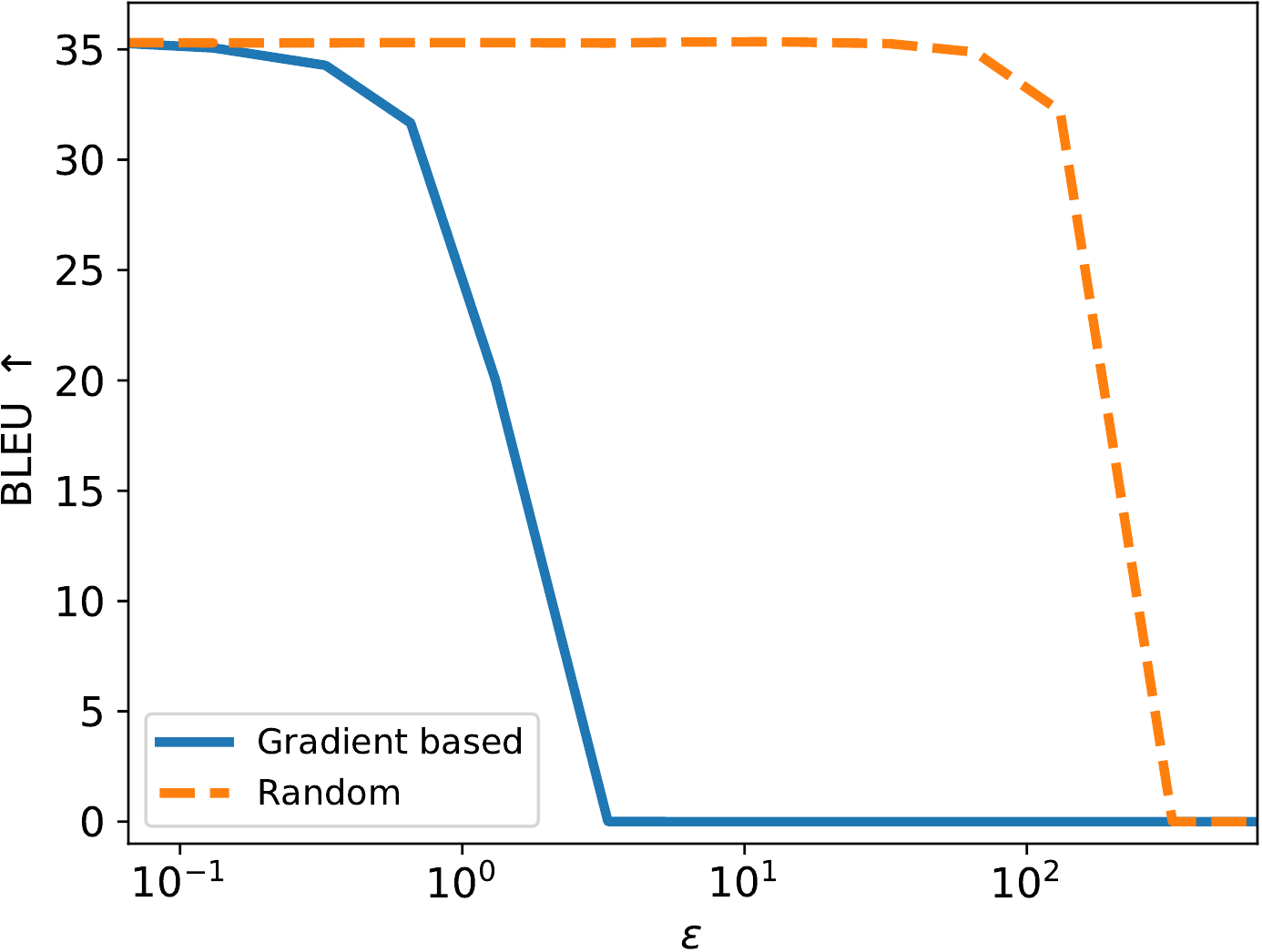}}
\hfil
\subcaptionbox{LSTM ($L_2$)}{\includegraphics[width=0.22\linewidth]{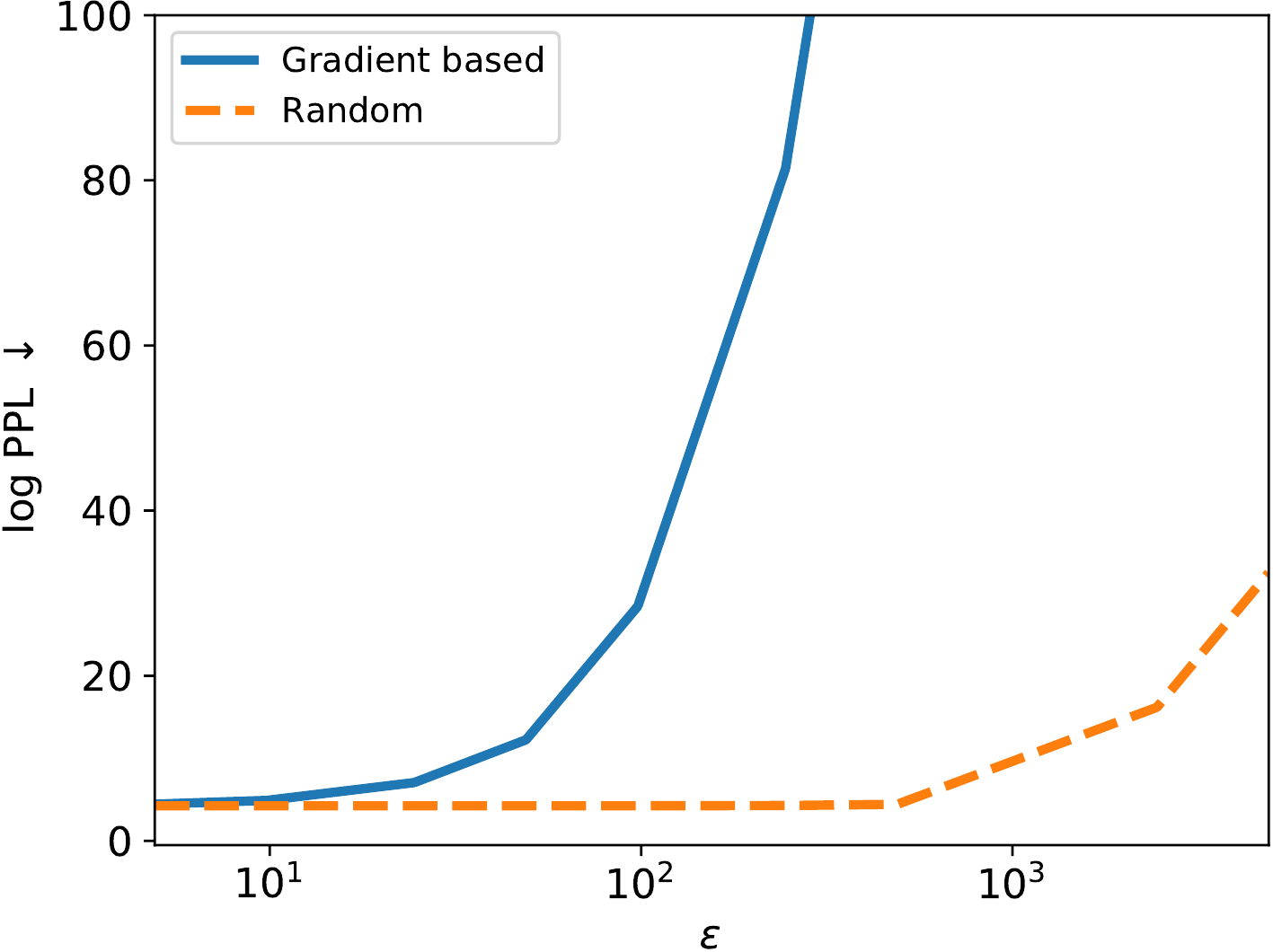}}
\hfil
\subcaptionbox{MLP ($L_2$)}{\includegraphics[width=0.22\linewidth]{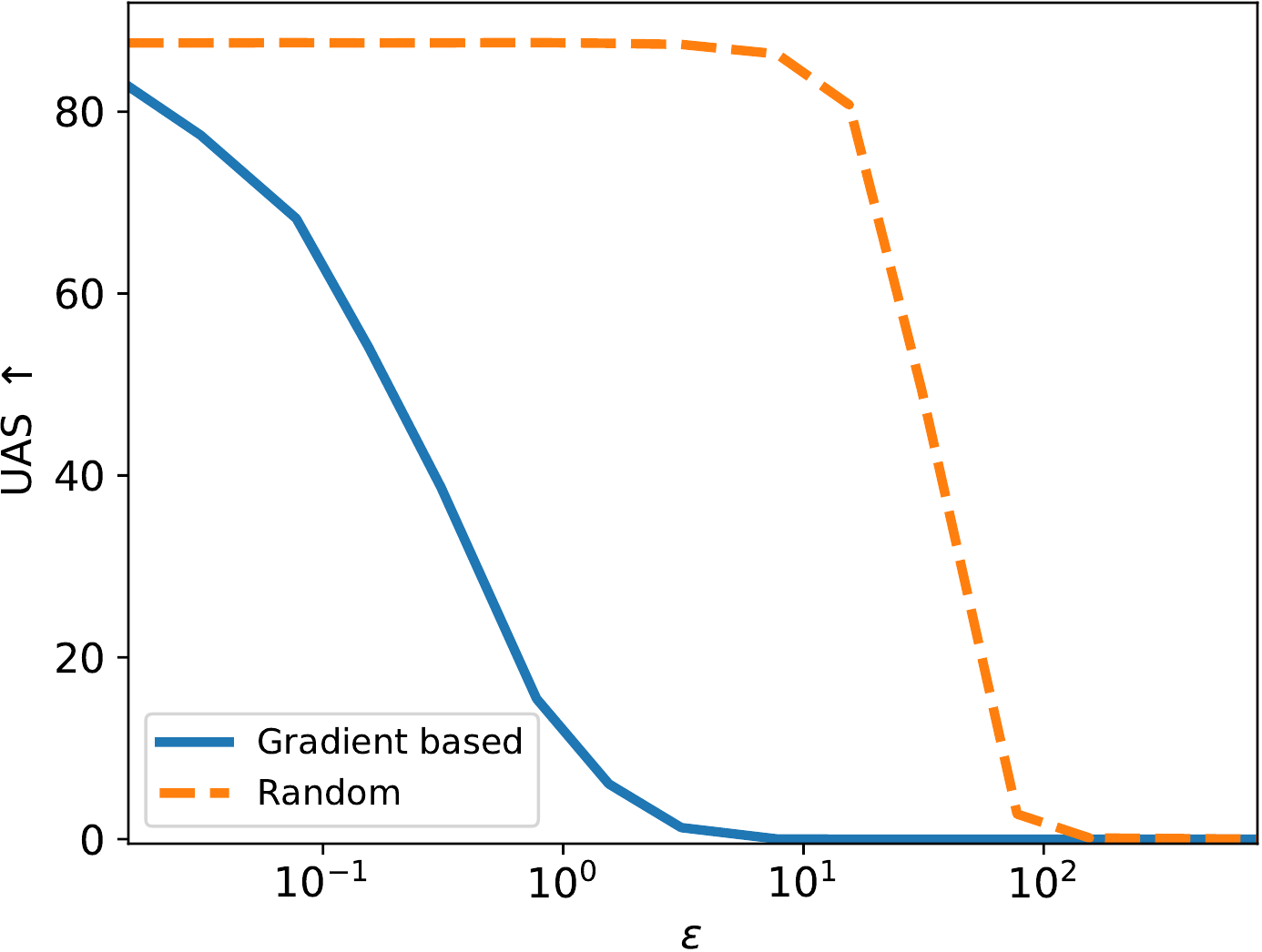}}
\subcaptionbox{ResNet-34 ($L_{+\infty}$)}{\includegraphics[width=0.22\linewidth]{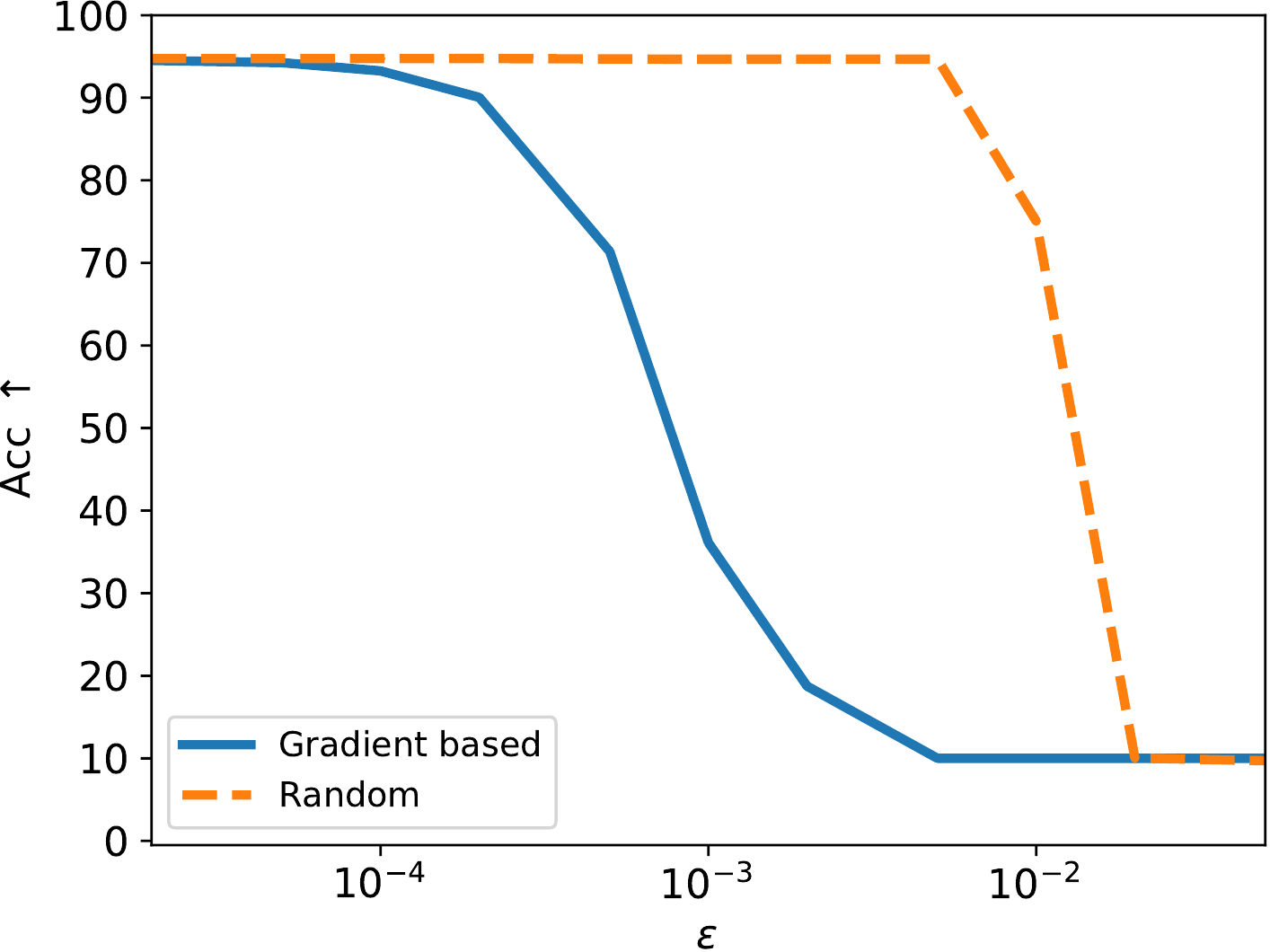}}
\hfil
\subcaptionbox{Transformer ($L_{+\infty}$)}{\includegraphics[width=0.22\linewidth]{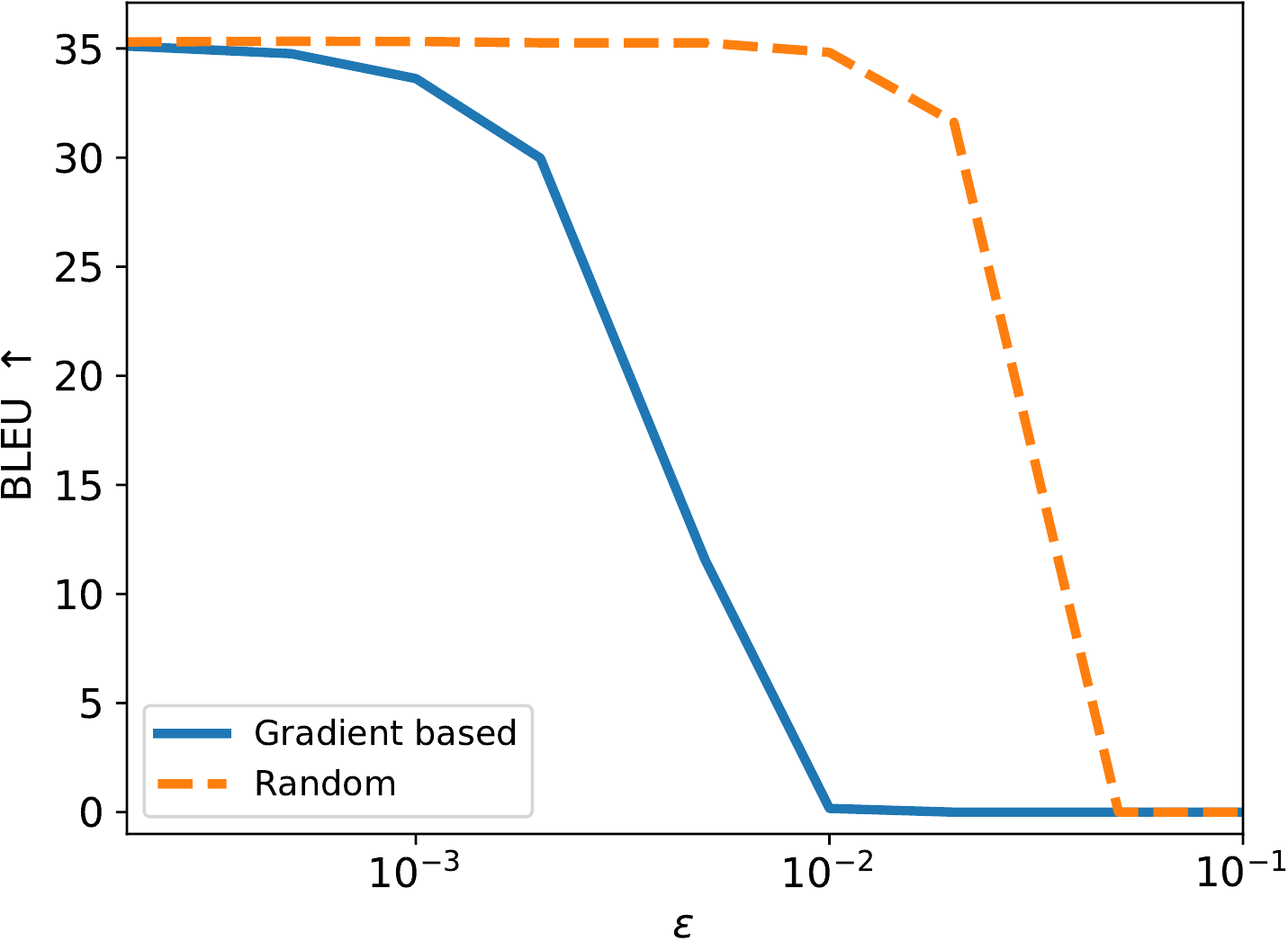}}
\hfil
\subcaptionbox{LSTM ($L_{+\infty}$)}{\includegraphics[width=0.22\linewidth]{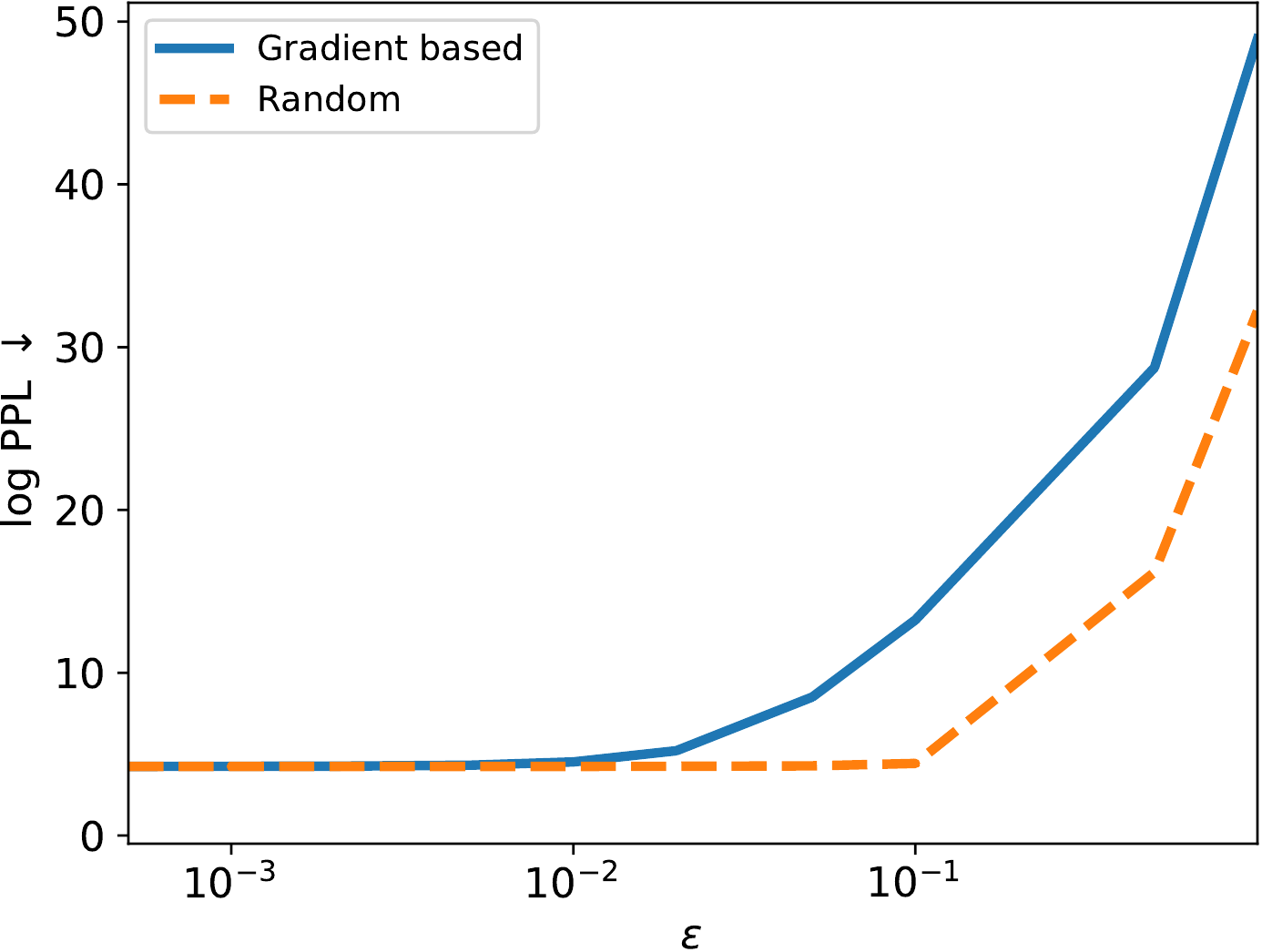}}
\hfil
\subcaptionbox{MLP ($L_{+\infty}$)}{\includegraphics[width=0.22\linewidth]{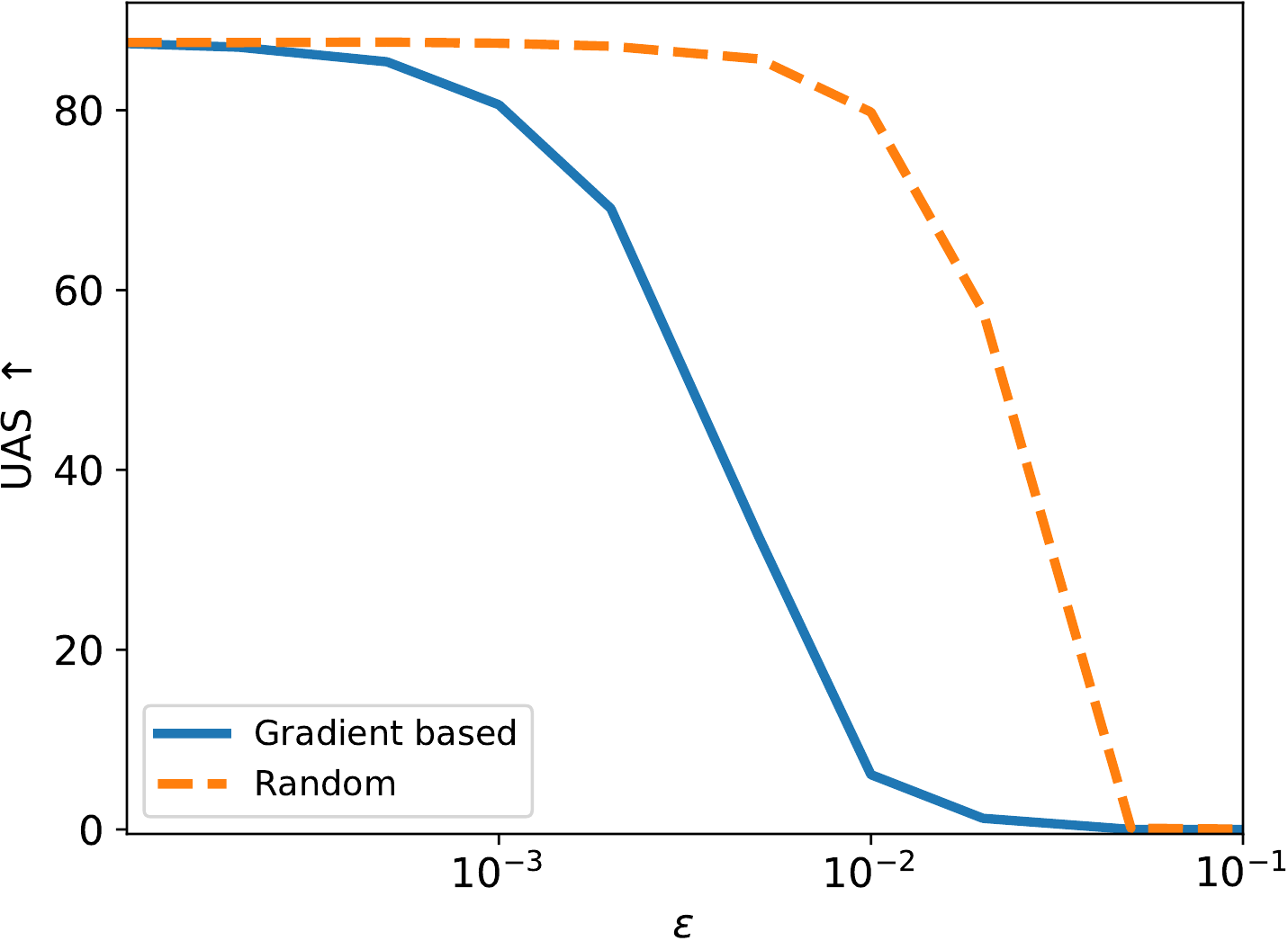}}
\caption{Results of gradient-based corruption and random corruption under the corruption constraints ($n=k$). Results of ResNet-34 are from CIFAR-10. We can conclude that the gradient-based corruption performs more effectively than the random corruption on all the tasks.}
\label{fig:compare with ramdom}
%\vskip -0.2 in
\end{figure*}

%%%%%%%%%%%%%%%%%%%%%%%%%%%%%%%%%%%%%%%%%%%%%%%%%%%%%%%%%%%%%%%%%%%%%%%%%%%%%%%%%%%%%%%%%%%%%%%%%%%%%%%%%%%%%%%%%%%%%%%%%%%%%%%

\subsection{Gradient-Based Corruption}

To arrive at the optimal parameter corruption that renders a more accurate estimation of the proposed indicator, we further propose a gradient-based method based on maximizing the first-order estimation $f(\vect{a})=\vect{a}^\text{T}\vect{g}$ of the indicator.

\begin{defn}[Gradient-Based Corruption and Estimation]
\label{def:2}
%The gradient-based parameter corruption estimates $\Delta\mathcal{L}(\vect{w}, \vect{a}; \mathcal{D})$ 
%Using the first-order Taylor expansion, 

Maximizing the first-order estimation $f(\vect{a})=\vect{a}^\text{T}\vect{g}$ of the indicator, the gradient-based parameter corruption $\vect{\hat a}$ in $S$ is
\begin{align}
\vect{\hat a}=\argmax_{\vect{a}\in S}\vect{a}^\text{T}\vect{g} &= \epsilon\left(\text{sgn}(\vect{h})\odot\frac{|\vect{h}|^\frac{1}{p-1}}{\||\vect{h}|^\frac{1}{p-1}\|_p}\right); 
\label{equ:proposed}\\ f(\vect{\hat a})&=\vect{\hat a}^\text{T}\vect{g}=\epsilon\|\vect{h}\|_{\frac{p}{p-1}};
\end{align}
where $\vect{h}=top_n(\vect{g})$, other notations are used similarly to Definition~\ref{def:1}. The resultant corruption vector leads to a gradient-based estimation of the indicator.
\end{defn}

The procedure of the gradient-based method is summarized in Algorithm~\ref{alg:gradient}. The error bound of the gradient-based estimation is described in Theorem~\ref{thm:bound}. The proof and further analysis of computational complexity are in Appendix.

\begin{thm}[Error Bound of the Gradient-Based Estimation]
\label{thm:bound}
Suppose $\mathcal{L}(\vect{w};\mathcal{D})$ is convex and $L$-smooth with respect to $\vect{w}$ in the subspace $\{\vect{w}+\vect{a}:\vect{a}\in S\}$, where $S=\{\vect{a}:\|\vect{a}\|_p=\epsilon\text{ and }\|\vect{a}\|_0\le n\}$.\footnote{Note that $\mathcal{L}$ is only required to be convex and $L$-smooth in a neighbourhood of $\vect{w}$, instead of the entire $\mathbb{R}^k$.} Suppose $\vect{a^*}$ and $\vect{\hat a}$ are the optimal corruption and the gradient-based corruption in $S$ respectively. $\|\vect{g}\|_2=G>0$. It is easy to verify that $\mathcal{L}(\vect{w}+\vect{a^*};\mathcal{D})\ge \mathcal{L}(\vect{w+\vect{\hat a}};\mathcal{D})>\mathcal{L}(\vect{w};\mathcal{D})$ . It can be proved that the loss change of the gradient-based corruption is the same order infinitesimal of that of the optimal parameter corruption:
\begin{equation}
\frac{\Delta_\text{max}\mathcal{L}(\vect{w}, S; \mathcal{D})}{\Delta\mathcal{L}(\vect{w}, \vect{\hat a}; \mathcal{D})}=1+O\left(\frac{Ln^{g(p)}\sqrt{k}\epsilon}{G}\right);
\label{eq:bound}
\end{equation}
where $g(p)$ is formulated as $g(p)=\max\{\frac{p-4}{2p}, \frac{1-p}{p}\}$.
\end{thm}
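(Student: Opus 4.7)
The plan is to sandwich $\Delta\mathcal{L}(\vect{w},\vect{a}^{*};\mathcal{D})$ from above via $L$-smoothness and $\Delta\mathcal{L}(\vect{w},\vect{\hat a};\mathcal{D})$ from below via convexity, exploiting that $\vect{\hat a}$ maximizes the linear functional $f(\vect{a})=\vect{a}^{\text{T}}\vect{g}$ over $S$ by Definition~\ref{def:2}. The $L$-smooth inequality gives $\Delta\mathcal{L}(\vect{w},\vect{a}^{*};\mathcal{D}) \le f(\vect{a}^{*}) + (L/2)\|\vect{a}^{*}\|_{2}^{2} \le f(\vect{\hat a}) + (L/2)\|\vect{a}^{*}\|_{2}^{2}$, the second inequality holding because $\vect{a}^{*}\in S$. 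Convexity gives $\Delta\mathcal{L}(\vect{w},\vect{\hat a};\mathcal{D}) \ge f(\vect{\hat a}) > 0$. Dividing,
\[
\frac{\Delta_{\text{max}}\mathcal{L}(\vect{w},S;\mathcal{D})}{\Delta\mathcal{L}(\vect{w},\vect{\hat a};\mathcal{D})} \;\le\; 1 + \frac{L\,\|\vect{a}^{*}\|_{2}^{2}}{2\,f(\vect{\hat a})},
\]
so the task reduces to bounding the second term by $O(Ln^{g(p)}\sqrt{k}\epsilon/G)$.

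I would then bound numerator and denominator using sparse norm inequalities, splitting into the regimes $p\ge 2$ and $p\le 2$. Since $\vect{a}^{*}$ is $n$-sparse with $\|\vect{a}^{*}\|_{p}=\epsilon$, finite-dimensional norm equivalence yields $\|\vect{a}^{*}\|_{2}^{2} \le n^{\max(1-2/p,\,0)}\epsilon^{2}$. For the denominator, set $q=p/(p-1)$ so that $f(\vect{\hat a})=\epsilon\|\vect{h}\|_{q}$ with $\vect{h}=\text{top}_{n}(\vect{g})$. Because $\vect{h}$ collects the $n$ largest squared coordinates of $\vect{g}$, these exceed the global average and $\|\vect{h}\|_{2}^{2}\ge (n/k)G^{2}$. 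Lifting to the $q$-norm: when $q\le 2$ (i.e.\ $p\ge 2$), monotonicity of $\ell^{q}$ gives $\|\vect{h}\|_{q} \ge \|\vect{h}\|_{2} \ge \sqrt{n/k}\,G$; when $q\ge 2$ (i.e.\ $p\le 2$), the power-mean inequality on the $n$-sparse vector $\vect{h}$ gives $\|\vect{h}\|_{q} \ge n^{1/q-1/2}\|\vect{h}\|_{2} \ge n^{(p-1)/p}G/\sqrt{k}$.

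Substituting these estimates, the relative error becomes at most $(L/2)n^{1/2-2/p}\sqrt{k}\epsilon/G$ when $p\ge 2$ and $(L/2)n^{(1-p)/p}\sqrt{k}\epsilon/G$ when $p\le 2$. Since $1/2-2/p = (p-4)/(2p)$ and the difference $(p-4)/(2p)-(1-p)/p = 3(p-2)/(2p)$ flips sign at $p=2$, the active exponent in each regime is precisely the term achieving the maximum in $g(p)=\max\{(p-4)/(2p),\,(1-p)/p\}$, yielding the claimed $1+O(Ln^{g(p)}\sqrt{k}\epsilon/G)$.

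The main obstacle is pinning down the $n$-exponent sharply. A crude lower bound $\|\vect{h}\|_{q}\ge \|\vect{h}\|_{\infty}\ge G/\sqrt{k}$ suffices when $p\ge 2$ but leaves a gap of $n^{1/q}$ in the $p\le 2$ regime and cannot recover $g(p)$; the right ingredient is the top-$n$ averaging bound $\|\vect{h}\|_{2}^{2}\ge (n/k)G^{2}$ paired with the sparse power-mean inequality, which together lift $\|\vect{h}\|_{2}$ to the correct $q$-norm estimate. The remaining case analysis and algebraic manipulations are routine.
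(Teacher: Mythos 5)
Your proposal is correct and follows essentially the same route as the paper's proof: lower-bound $\Delta\mathcal{L}(\vect{w},\vect{\hat a};\mathcal{D})$ by $f(\vect{\hat a})=\epsilon\|\vect{h}\|_q$ via convexity, upper-bound $\Delta\mathcal{L}(\vect{w},\vect{a}^*;\mathcal{D})$ by $f(\vect{\hat a})+\tfrac{L}{2}\|\vect{a}^*\|_2^2$ via $L$-smoothness and optimality of $\vect{\hat a}$ for the linear functional, then control $\|\vect{a}^*\|_2$ and $\|\vect{h}\|_q$ with the sparse norm-equivalence constants $\max\{1,n^{1/2-1/r}\}$ together with the top-$n$ averaging bound $\|\vect{h}\|_2^2\ge (n/k)G^2$, exactly as in the appendix. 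Your case analysis over $p\gtrless 2$ reproduces the paper's computation of $\beta_p^2\beta_q$ and hence the exponent $g(p)$.
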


Theorem~\ref{thm:bound} guarantees when perturbations to model parameters are small enough, the gradient-based corruption can accurately estimate the indicator with small errors. In Eq.(\ref{eq:bound}), the numerator is the proposed indicator, which is the maximum loss change caused by parameter corruption, and the denominator is the loss change with the parameter corruption generated by the gradient-based method. As we can see, when  $\epsilon$, the $p$-norm of the corruption vector, tends to zero, the term $O(\cdot)$ will also tend to zero such that the 
ratio becomes one, meaning the gradient-based method is an infinitesimal estimation of the indicator.

\section{Experiments}
%In this section, 
We first empirically validate the effectiveness of the proposed gradient-based corruption compared to random corruption. Then, it is applied to evaluate the robustness of neural network parameters by scanning for vulnerability and counteract parameter corruption via adversarial training.

\subsection{Experimental Settings}

We use four widely-used tasks including benchmark datasets in CV and NLP and use diverse neural network architecture. On the image classification task,
the base model is ResNet-34 \cite{he2016deep}, the datasets are CIFAR-10 \cite{torralba200880} and ImageNet, and the evaluation metric is accuracy. On the machine translation task, the base model is Transformer provided by fairseq~\cite{ott2019fairseq}, the dataset is German-English translation dataset (De-En) \citet{ott2019fairseq,DBLP:journals/corr/RanzatoCAZ15,DBLP:conf/emnlp/WisemanR16}, and the evaluation metric is BLEU score. On the language modeling task, the base model is LSTM following \cite{merityRegOpt,merityAnalysis}, the dataset is the English Penn TreeBank (PTB-LM) \citep{DBLP:journals/coling/MarcusSM94}, and the evaluation metric is Log Perplexity (Log PPL). On the dependency parsing task, the base model is MLP following \citet{DBLP:conf/emnlp/ChenM14}, the dataset is the English Penn TreeBank dependency parsing (PTB-Parsing) \citep{DBLP:journals/coling/MarcusSM94}, and the evaluation metric is Unlabeled Attachment Score (UAS). For the detailed experimental setup, please refer to Appendix.

\subsection{Validation of Gradient-Based Corruption}
\label{sec:exp:grad-rand}

The comparative results between the gradient-based corruption and the random corruption are shown in Figure~\ref{fig:compare with ramdom} and Table~\ref{tab:rand-results}. Figure~\ref{fig:compare with ramdom} shows that parameter corruption under the corruption constraint can result in substantial accuracy degradation for different sorts of neural networks and the gradient-based parameter corruption requires smaller perturbation than the random parameter corruption. The gradient-based corruption works for smaller corruption length and causes more damage at the same corruption length. 
To conclude, the gradient-based corruption effectively defects model parameters with minimal corruption length compared to the random corruption, thus being a viable and efficient approach to find the parameter vulnerability.

\subsection{Probing the Vulnerability of DNN Parameters}
Here we use the indicator to probe the Vulnerability of DNN Parameters. We use the gradient-based corruption on parameters from separated components and set $n$ as the maximum number of the corrupted parameters. We probe the vulnerability of network parameters in terms of two natural structural characteristics of deep neural networks: the type, e.g., whether they belong to embeddings or convolutions, and the position, e.g., whether they belong to lower layers or higher layers. Due to limited space, the results of different layers in neural networks and detailed visualization of the vulnerability of different components are shown in Appendix.

\paragraph{Vulnerability in Terms of Parameter Types}
Figure~\ref{fig:components} (a-b) show the distinguished vulnerability of different selected components in ResNet-34 and Transformer. Several observations can be drawn from the results:
\textit{(1) Normalization layers are prone to parameter corruption.} The batch normalization in ResNet-34 and the layer normalization in Transformer are most sensitive in comparison to other components in each network. It is possible that since these components adjust the data distribution, a slight change in scaling or biasing could lead to systematic disorder in the whole network.
\textit{(2) Convolution layers are more sensitive to corruption than fully-connected layers.} Since parameters in convolution, i.e., the filters, are repeatedly applied to the input feature grids, they might exert more influence than parameters in fully-connected layers that are only applied to the inputs once.
\textit{(3) Embedding and attention layers are relatively robust against parameter corruption.} It is obvious that embeddings consist of word vectors and fewer word vectors are corrupted if the corrupted number of parameters is limited, thus scarcely affecting the model. The robustness of attention is intriguing and further experimentation is required to understand its characteristics.

%%%%%%%%%%%%%%%%%%%%%%%%%%%%%%%%%%%%%%%%%%%%%%%%%%%%%%%%%%%%%%%%%%%%%%%%%%%%%%%%%%%%%%%%%%%%%%%%%%%%%%%%%%%%%%%%%%%%%%%%%%%%%%%

\begin{figure*}[!t]
\centering
\begin{minipage}{.45\linewidth}
\centering
\subcaptionbox{ResNet-34 }{\includegraphics[width=0.48\linewidth]{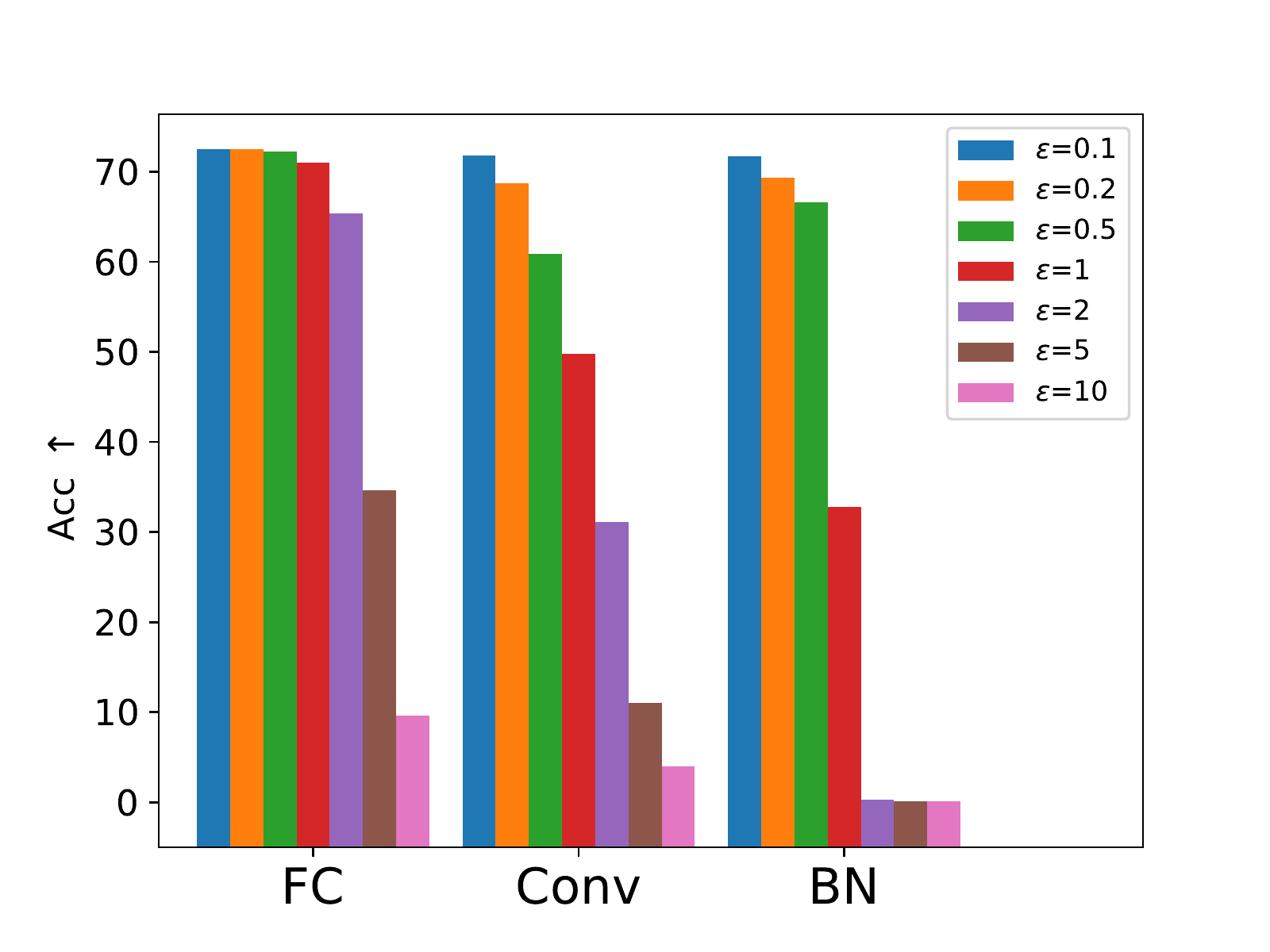}}
\hfil
\subcaptionbox{Transformer}{\includegraphics[width=0.48\linewidth]{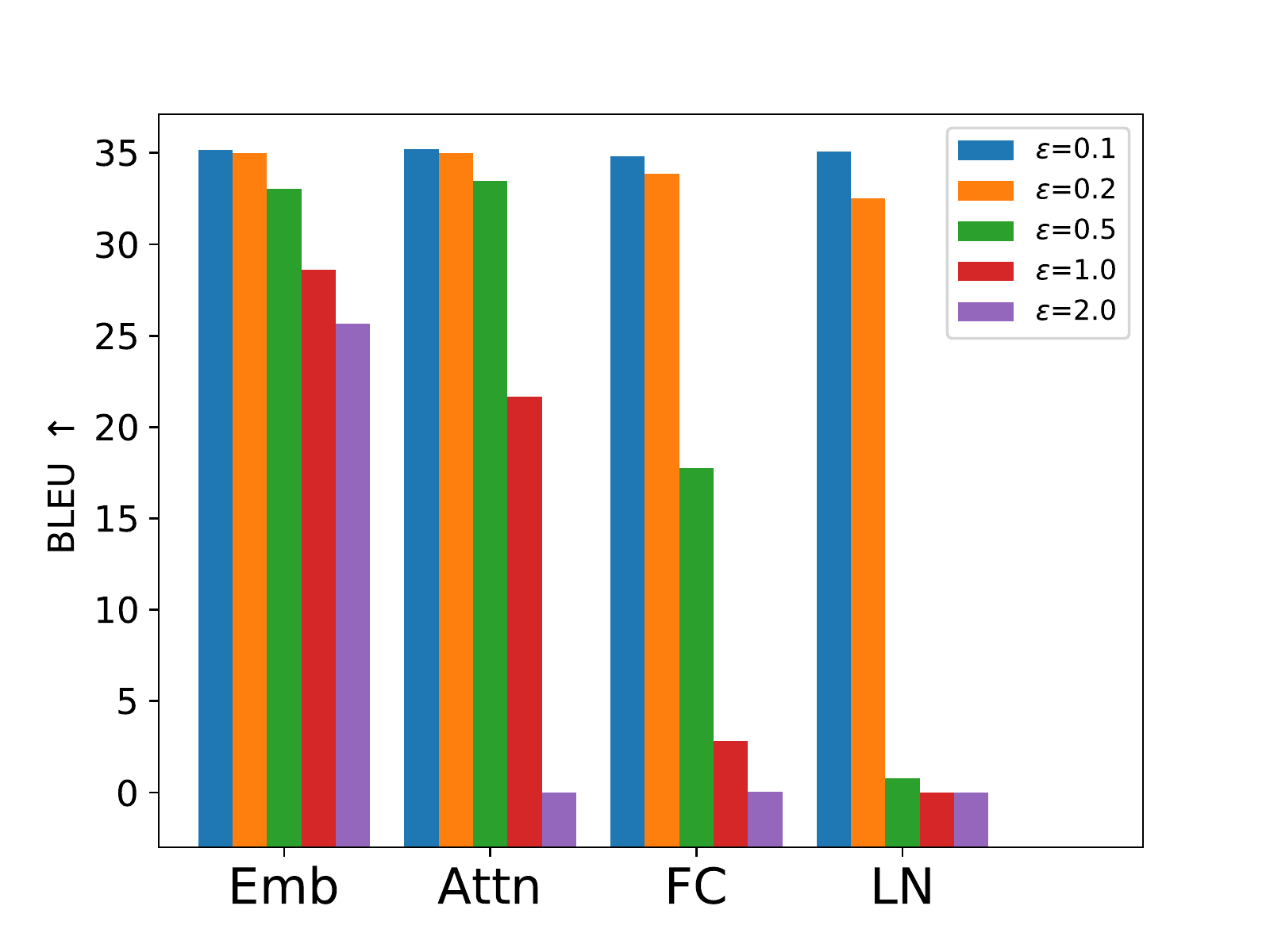}}

\subcaptionbox{ResNet-34 }{\includegraphics[width=0.48\linewidth]{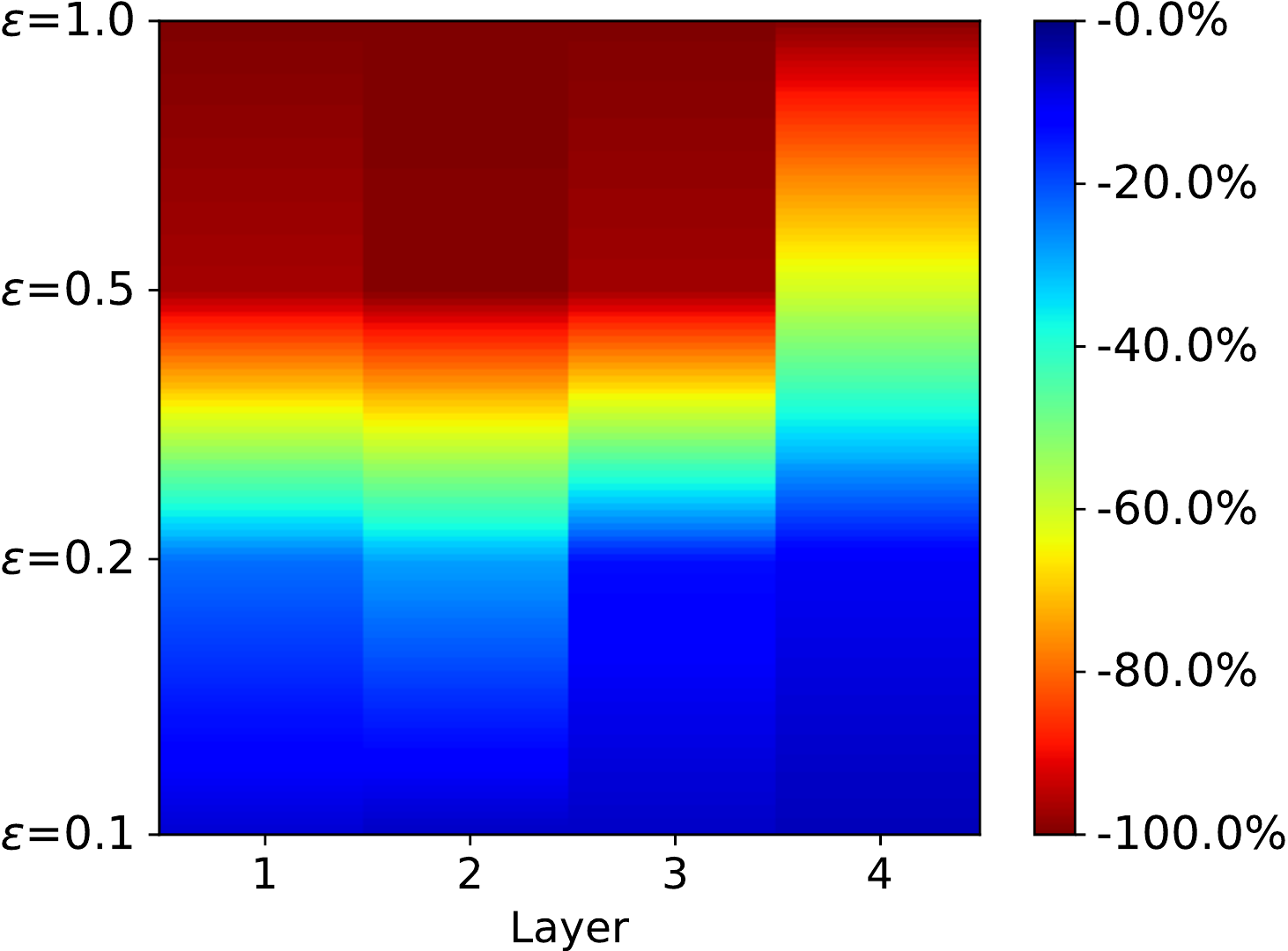}}
\hfil
\subcaptionbox{Transformer-Dec}{\includegraphics[width=0.48\linewidth]{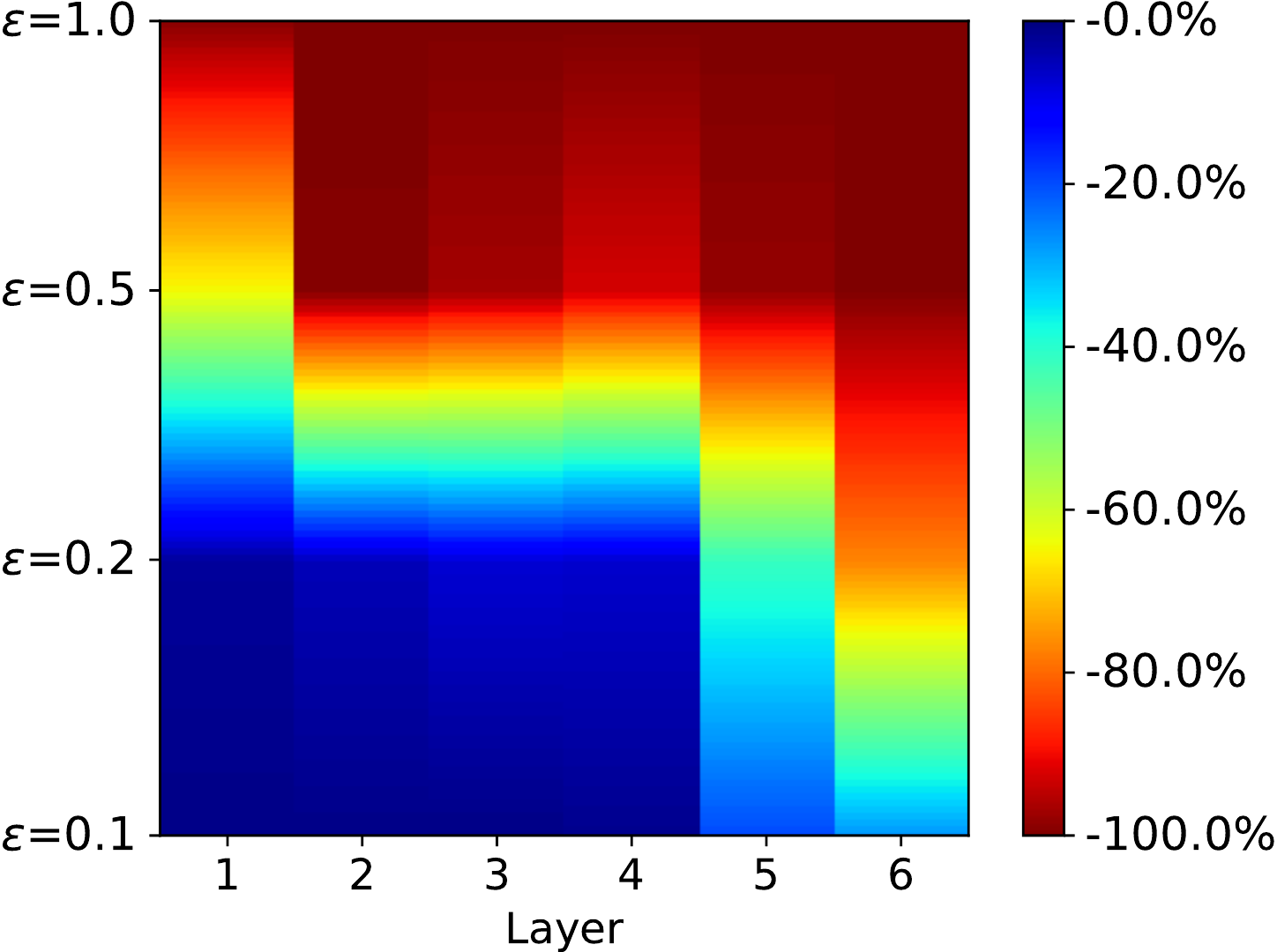}}
\end{minipage}
\hfil
\begin{minipage}{0.52\linewidth}
\subcaptionbox{ResNet-34 }[0.42\linewidth]{\includegraphics[height=16.5\baselineskip]{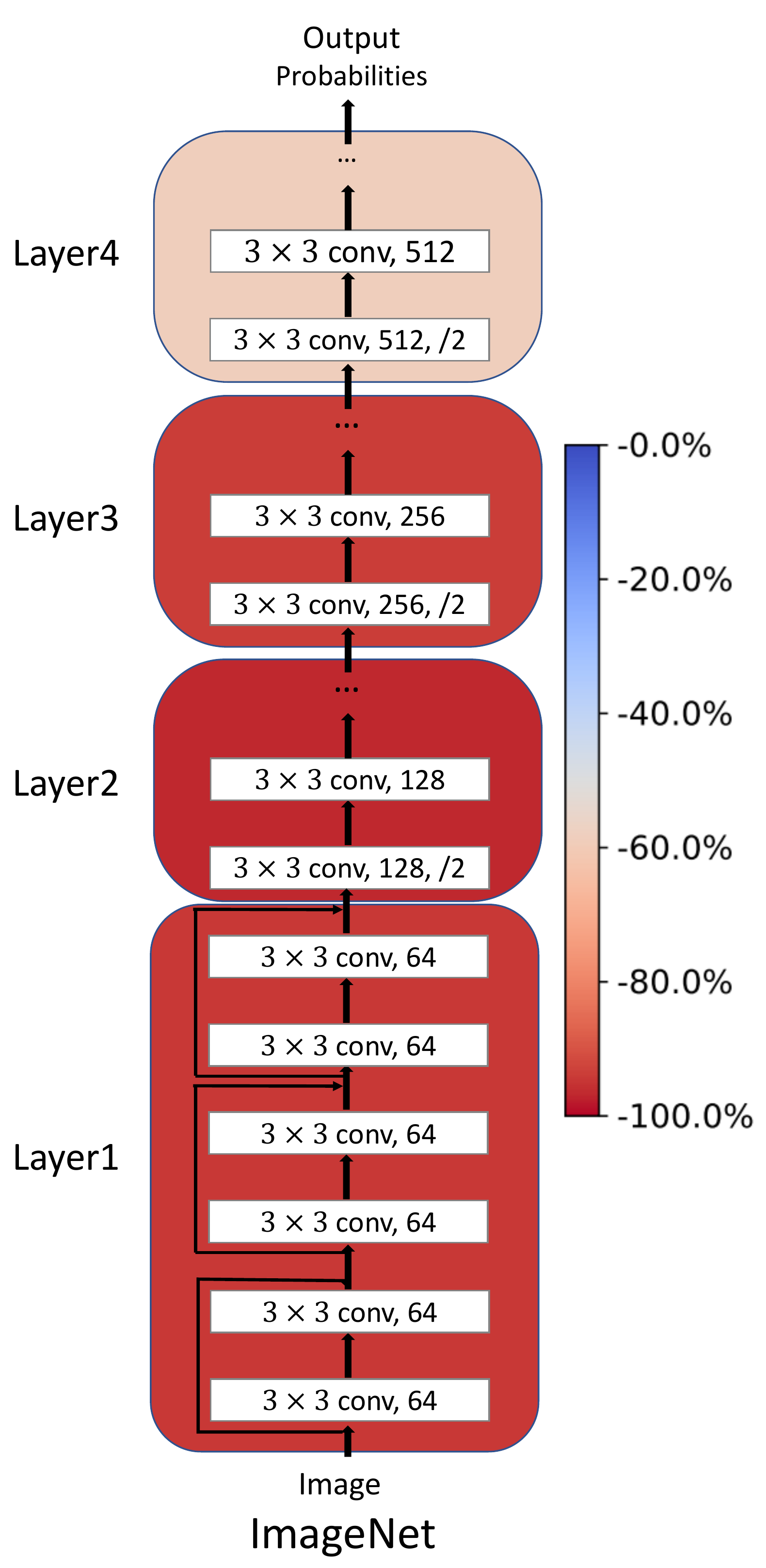}}
\hfil
\subcaptionbox{Transformer}[0.57\linewidth]{\includegraphics[height=16.5\baselineskip]{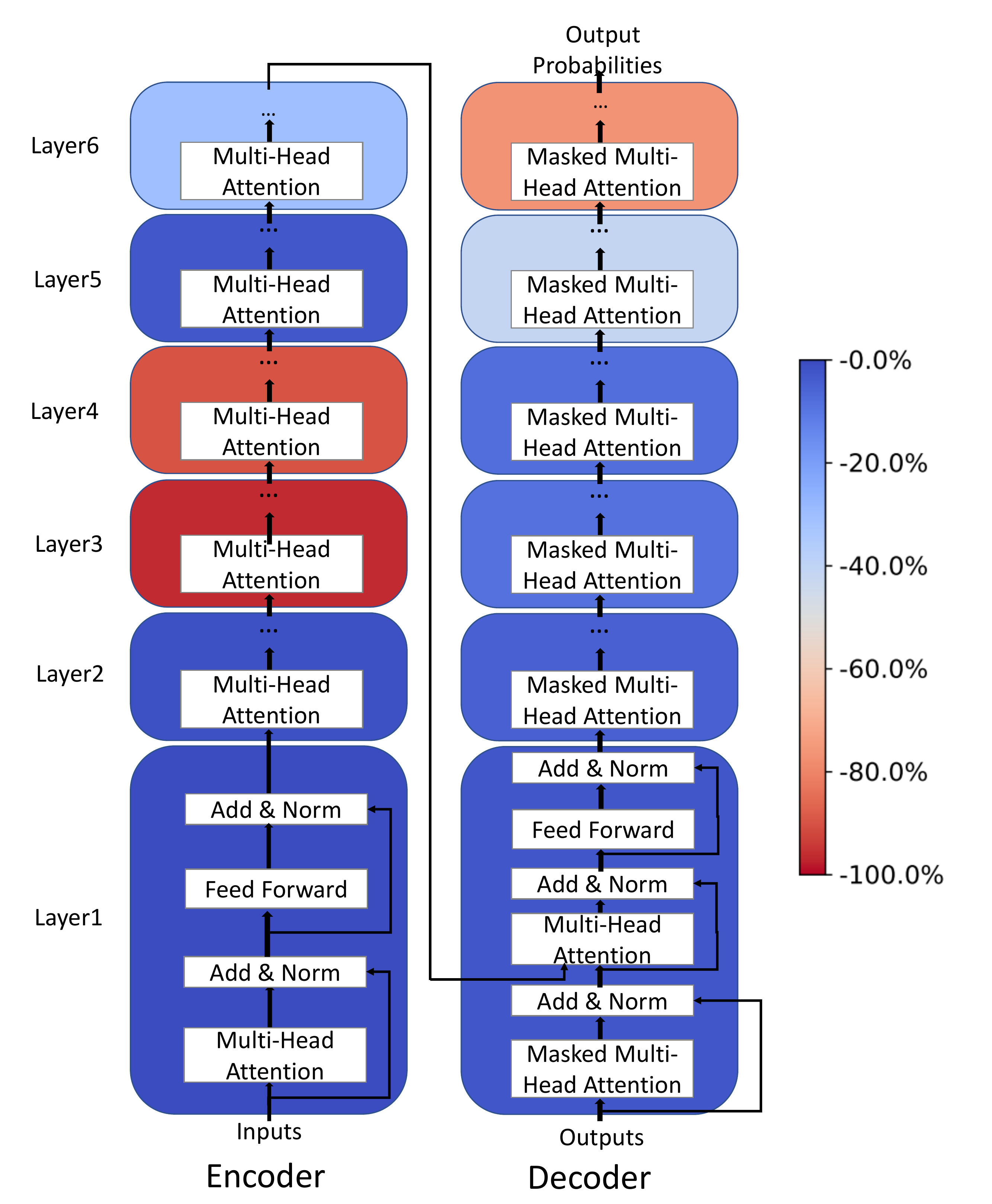}}
\end{minipage}
\caption{
Results of gradient-based corruption on (a-b) different components of ResNet-34 (ImageNet) and Transformer under the corruption constraint ($n=10$, $L_{+\infty}$-norm) and (c-f) different layers in ResNet-34 and Transformer ($n=100$, $L_{+\infty}$-norm). Conv: convolution; Emb: embedding; FC: fully-connected; Attn: attention; BN: batch normalization; LN: layer normalization. $\epsilon$ is set to be 10 and 0.2 for ResNet-34 and Transformer, respectively.  Warmer colors indicate significant accuracy degradation.
}
\label{fig:components}
%\vskip -0.15 in
\end{figure*}

%%%%%%%%%%%%%%%%%%%%%%%%%%%%%%%%%%%%%%%%%%%%%%%%%%%%%%%%%%%%%%%%%%%%%%%%%%%%%%%%%%%%%%%%%%%%%%%%%%%%%%%%%%%%%%%%%%%%%%%%%%%%%%%

\paragraph{Vulnerability in Terms of Parameter Positions}
The illustration of division of different layers and results of parameter corruption on different layers are shown in Figure~\ref{fig:components} (c-f). We can draw the following observations:
\textit{(1) Lower layers in ResNet-34 are less robust to parameter corruption.} It is generally believed that lower layers in convolutional neural networks extract basic visual patterns and are very fundamental in classification tasks \cite{yosinki2014transfer}, which indicates that perturbations to lower layers can fundamentally hurt the whole model.
\textit{(2) Upper layers in Transformer Decoder are less robust to parameter corruption.} From the sequence-to-sequence perspective, the encoder layers encode the sequence from shallow semantics to deep semantics and the decoder layers decode the sequence in a reversed order. It means that the higher layers are responsible for the choice of specific words and have a direct impact on the generated sequence. For Transformer Encoder, the parameter corruption exhibits inconspicuous trends.

As we can see, the proposed indicator reveals several problems that are rarely paid attention to before. Especially, the results on normalization layers should provide verification for the heuristic designs of future architecture.

\section{Adversarial Corruption-Resistant Training}

As shown by the probing results, the indicator can reveal interesting vulnerability of neural networks, which leads to poor robustness against parameter corruption. An important question is what we could do about the discovered vulnerability in practice, since it could be the innate characteristic of the neural network components and cannot be eliminated in design. However, if we can automatically drive the parameters from the area with steep surroundings measured by the indicator, we can obtain models that achieve natural balance on accuracy and parameter robustness.

\subsection{Adversarial Corruption-Resistant Loss}
To this end, we propose the adversarial corruption-resistant loss $\mathcal{L}_a$ to counteract parameter corruptions in an adversarial way. The key idea is to routinely corrupt the parameters and minimize both the induced loss change and the original loss. Intuitively, the proposed method tries to keep the parameters away from the neighborhood where there are sheer directions around, which means the parameters should be situated at the center of a flattish basin in the loss landscape.

Concretely, given batched data $\mathcal{B}$, \textit{virtual} gradient-based corruption $\vect{\hat a}$ on parameter $\vect{w}$, we propose to minimize both the loss with corrupted parameter $\vect{w}+\vect{\hat a}$ and the original loss by minimizing a new loss $\mathcal{L}^*(\vect{w}; \mathcal{B})$:
\begin{align}
\mathcal{L}^*(\vect{w}; \mathcal{B})&=(1-\alpha) \mathcal{L}(\vect{w}; \mathcal{B})+\alpha \mathcal{L}(\vect{w}+\vect{\hat a}; \mathcal{B})\label{eq:freeLB}
\\ &\approx (1-\alpha)\mathcal{L}(\vect{w}; \mathcal{B})+\alpha [\mathcal{L}(\vect{w}; \mathcal{B})+f(\vect{\hat a})]\\ &= \mathcal{L}(\vect{w}; \mathcal{B})+\alpha f(\vect{\hat a})\label{eq:norm}.
\end{align}

According to Eq.(\ref{equ:proposed}), when $S=\{\|\vect{a}\|_p=\epsilon\}$, $f(\vect{\hat a})$ can be written as $ f(\vect{\hat a})=\epsilon\|\vect{g}\|_{p/(p-1)}$, where $\vect{g}=\nabla_\vect{w}\mathcal{L}(\vect{w}; \mathcal{B})$, which can be seen as a regularization term in the proposed adversarial corruption-resistant training. We can see that it actually serves as \textit{gradient regularization} by simple derivation. Therefore, we define the adversarial corruption-resistant loss $\mathcal{L}_a(\vect{w}; \mathcal{B})$ as
\begin{align}
\mathcal{L}_a(\vect{w}; \mathcal{B})=\mathcal{L}(\vect{w}; \mathcal{B})+\lambda\|\vect{g}\|_{p/(p-1)}
\end{align}
where $\mathcal{L}_a$ is equivalent to Eq.(\ref{eq:norm}) when $\lambda=\alpha\epsilon$. \citet{L1} adopts the $L_1$-norm of gradients as regularization term to improve the robustness of model against quantization, which can be treated as the $L_{+\infty}$ bounded parameter corruption. In our proposed universal framework, we adopt the $L_{p/(p-1)}$-norm of gradients as regularization term to resist the $L_p$-norm bounded parameter corruption.

\begin{table*}[t]
\centering
\scriptsize
\setlength{\tabcolsep}{2 pt}
\begin{tabular}{@{}llcclcclcclcclcc@{}}
\toprule
\textbf{Dataset} & \multicolumn{3}{c}{\textbf{Tiny-ImageNet (Acc $\uparrow$)} } & \multicolumn{3}{c}{\textbf{CIFAR-10 (Acc $\uparrow$)} } & \multicolumn{3}{c}{\textbf{PTB-LM (PPL $\downarrow$)}} & \multicolumn{3}{c}{\textbf{PTB-Parsing (UAS $\uparrow$)}} & \multicolumn{3}{c}{\textbf{De-En (BLEU $\uparrow$)}}  \\
\cmidrule(r){1-1}\cmidrule(r){2-4}\cmidrule(r){5-7}\cmidrule(r){8-10}\cmidrule(r){11-13}\cmidrule(r){14-16}
Base model &\multicolumn{6}{c}{ResNet-34} & \multicolumn{3}{c}{LSTM} & \multicolumn{3}{c}{MLP} & \multicolumn{3}{c}{Transformer} \\
\midrule
Approach & $\epsilon$ & Baseline & Proposed & $\epsilon$ & Baseline & Proposed & $\epsilon$ & Baseline & Proposed & $\epsilon$ & Baseline & Proposed & $\epsilon$ & Baseline & Proposed \\
\cmidrule(r){1-1}\cmidrule(r){2-4}\cmidrule(r){5-7}\cmidrule(r){8-10}\cmidrule(r){11-13}\cmidrule(r){14-16}
w/o corruption & - & 66.56 & \textbf{67.06 (+0.50)} & - & 94.26 & \bf 96.12 (+1.86) &  - & \PH 70.09 & \textbf{68.28 (-1.81)} &  -  & 87.26 & \bf 87.89 (+0.63) & -  &  35.33 & \textbf{35.69 (+0.36)}\\
\cmidrule(r){1-1}\cmidrule(r){2-4}\cmidrule(r){5-7}\cmidrule(r){8-10}\cmidrule(r){11-13}\cmidrule(r){14-16}
\multirow{5}*{Corruption} & 0.05 & 64.88 & 65.79 & 0.05 & 93.20  & 95.65 & 10 & 139.23 & 69.12 & 0.005 & 86.74  & 87.67 & 0.5 & 31.68 &  34.84 \\
& 0.1 & 60.70 & 63.47 & 0.1 & 89.84  & 94.59 & 12 & 180.91 & 69.30 & 0.01 & 85.32 & 87.26  & 0.75 & 22.13 & 33.58 \\
& 0.2 & 41.79 & 52.51 & 0.2 & 71.44 & 87.92 & 14 & 240.99 & 69.61 &  0.05 & 72.32 & 82.06 & 1 & 11.17 & 31.25 \\
& 0.5 & \PH 1.07 & \PH 7.92 & 0.5 & 13.77 & 21.42 & 16 & 327.49 & 70.04 &0.1 & 61.80 & 73.19   & 1.25  & \PH 3.47 & 24.47 \\
& 1 & \PH 0.55 & \PH 1.26 & 1 & 10.00 & 10.94 & 18 & 452.03 & 70.61 & 0.2 & 49.46 & 57.73 & 1.5 & \PH 1.56 & 14.91\\
\bottomrule
\end{tabular}
\caption{
Results of the proposed corruption-resistant training, which not only improves the accuracy without corruption but also enhances the robustness against corruption. In parameter corruptions ($n=k$, $L_2$-norm), all parameters can be corrupted.}
\label{tab:eps_acc}
\end{table*}

\subsection{Relations to Resistance against Data Perturbations}

In the common $L_2$ or $L_{+\infty}$ cases, our \textit{gradient regularization} term can be written as $\|\vect{g}\|_{p/(p-1)}=\|\vect{g}\|_2$ when $p=2$, and $\|\vect{g}\|_{p/(p-1)}=\|\vect{g}\|_1$ when $p=+\infty$. 

The formulation of the gradient regularization $\mathcal{L}(\vect{w}; \mathcal{B})+\|\vect{g}\|_1$ (or $\|\vect{g}\|_2$) is similar to the weight regularization $\mathcal{L}(\vect{w}; \mathcal{B})+\|\vect{w}\|_1$ (or $\|\vect{w}\|_2$). \citet{adv-reg} indicates that $L_1$ or $L_2$ \textit{weight regularization} is equivalent to resist $L_{+\infty}$ and $L_2$ \textit{data perturbations} respectively under some circumstances. Complementarily, we show that  $L_1$ and $L_2$ \textit{gradient regularization} is equivalent to resist $L_{+\infty}$ and $L_2$ \textit{parameter corruptions}, respectively.

\subsection{Experiments} 
We conduct experiments on the above benchmark datasets to validate that the proposed corruption-resistant training functions as designed. For ImageNet, due to its huge size, we test our corruption resistant training method on a subset of ImageNet, the Tiny-ImageNet dataset. We find that optimizing $\mathcal{L}^*$ in Eq.(\ref{eq:freeLB}) directly instead of adopting the gradient regularization term can further improve the accuracy on some tasks. Therefore, we sometimes adopt a variant of $\mathcal{L}_a$ by directly optimizing $\mathcal{L}^*$ in Eq.(\ref{eq:freeLB}). Detailed experimental settings and supplemental results are reported in Appendix.

In Table~\ref{tab:eps_acc}, we can see that incorporating virtual gradient-based corruptions into adversarial training can help improve both the test accuracy and the robustness of neural networks against parameter corruption. In particular, we can see that parameters that are resistant to corruption, may entail better generalization, reflected as higher accuracy on the test set.

We also find that the accuracy of the uncorrupted neural network can often be improved substantially with small magnitude of virtual parameter corruptions. However, when the magnitude of virtual parameter corruptions grows too large, virtual parameter corruptions will harm the learning process and the accuracy the uncorrupted neural network will drop. In particular, the accuracy can be treated as a unimodal function of the magnitude of virtual parameter corruptions approximately, whose best configuration can be determined easily.

\section{Related Work}

\paragraph{Vulnerability of Deep Neural Networks}
Existing studies concerning vulnerability or robustness of neural networks mostly focus on generating adversarial examples~\citep{DBLP:journals/corr/GoodfellowSS14} and adversarial training algorithms given adversarial examples in the input data~\citep{DBLP:journals/corr/freeLB}. \citet{DBLP:journals/corr/SzegedyZSBEGF13} first proposed the concept of adversarial examples and found that neural network classifiers are vulnerable to adversarial attacks on input data. Following that study, different adversarial attack algorithms~\citep{DBLP:conf/cvpr/Moosavi-Dezfooli16,DBLP:conf/iclr/KurakinGB17a} were developed. Another class of studies~\citep{backdoor1,backdoor2,backdoors3,backdoor-Bert} known as backdoor attacks injected vulnerabilities to neural networks by data poisoning, which requires access to the training process of the neural network models.

\paragraph{Adversarial Training}
Other related work on adversarial examples aimed to design adversarial defense algorithms to evaluate and improve the robustness of neural networks over adversarial examples~\citep{DBLP:conf/sp/Carlini017,DBLP:conf/iclr/MadryMSTV18,DBLP:journals/corr/freeLB}.
As another application of adversarial training, GAN~\citep{DBLP:journals/corr/GAN} has been widely used in multiple machine learning tasks, such as computer vision~\citep{GAN8, GAN9}, natural language processing~\citep{GAN16, GAN17} and time series synthesis~\citep{GAN19, GAN21}.

\paragraph{Changes in Neural Network Parameters}
Existing studies also concern the influence of noises or changes in neural network parameters by training data poisoning~\citep{backdoor1,backdoor2,backdoors3}, bit flipping~\citep{TBT}, compression~\citep{Stronger} or quantization~\citep{Data-Free-Quant,model_robustness}. \citet{LCA} proposes the Loss Change Allocation indicator (LCA) to analyze the allocation of loss change partitioned to different parameters.

To summarize, existing related work mostly focuses on adversarial examples and its adversarial training. However, we focus on parameter corruptions of neural networks so as to find vulnerable components of models and design an adversarial corruption-resistant training algorithm to improve the parameter robustness.

\section{Conclusion}

To better understand the vulnerability of deep neural network parameters, which is not well studied before, we propose an indicator measuring the maximum loss change when a small perturbation is applied to model parameters to evaluate the robustness against parameter corruption. Intuitively, the indicator describes the steepness of the loss surface around the parameters. We show that the indicator can be efficiently estimated by a gradient-based method and random parameter corruptions can hardly induce the maximum degradation, which is validated both theoretically and empirically. In addition, we apply the proposed indicator to systematically analyze the vulnerability of different parameters in different neural networks and reveal that the normalization layers, which are important in stabilizing the data distribution in deep neural networks, are prone to parameter corruption. Furthermore, we propose an adversarial learning approach to improve the parameter robustness and show that parameters that are resistant to parameter corruption embody better robustness and accuracy.

\section{Acknowledgements}
\cc{We thank the anonymous reviewers for their constructive comments. This work is partly supported by Beijing Academy of Artificial Intelligence (BAAI).}

\section{Ethics Statement}

This paper presents a study on parameter corruptions in deep neural networks. Despite the promising performance in benchmark datasets, the existing deep neural network models are not robust in real-life scenarios and run the risks of adversarial examples, backdoor attacks~\citep{backdoor1,backdoor2,backdoors3,backdoor-Bert}, and the parameter corruption issues.

Unlike adversarial examples and backdoor attacks, parameter corruptions have drawn limited attention in the community despite its urgent need in areas such as hardware neural networks and software neural networks applied in a difficult hardware environment. Our work takes a first step towards the parameter corruptions and we are able to investigate the robustness of different model parameters and reveal vulnerability of deep neural networks. It provides fundamental guidance for applying deep neural networks in the aforementioned scenarios. Moreover, we also propose an adversarial corruption-resistant training to improve the robustness of neural networks, making such models available to many more critical applications.

On the other hand, the method used in this work to estimate the loss change could also be taken maliciously to tamper with the neural network applied in business. However, such kind of ``attack'' requires access to the storage of parameters, meaning that the system security would have been already breached. Still, it should be recommended that certain measures are taken to verify the parameters are not changed or check the parameters are corrupted in actual applications.

\small
\bibliography{aaai21}
\normalsize

\onecolumn
\appendix

\section{Appendix}

\subsection{Theoretical Analysis of the Random Corruption}

\label{sec:TheoreticalRandom}

In this section, we discuss the characteristics of the loss change caused by random corruption under a representative corruption constraint in Theorem~\ref{thm:random}. Here we choose the constraint set  $S=\{\vect{a}:\|\vect{a}\|_2= \epsilon\}$ and show that it is not generally possible for the random corruption to cause substantial loss changes in this circumstance both theoretically and experimentally, making it ineffective in finding vulnerability.

%%%%%%%%%%%%%%%%%%%%%%%%%%%%%%%%%%%%%%%%%%%%%%%%%%%%%%%%%%%%%%%%%%%%%%%%%%%%%%%%%%%%%%%
\begin{thmA}[Distribution of Random Corruption]
Given the constraint set  $S=\{\vect{a}:\|\vect{a}\|_2= \epsilon\}$ and a generated random corruption $\vect{\tilde a}$ by the Monte-Carlo estimation, which in turn obeys a uniform distribution on $\|\vect{\tilde a}\|_2=\epsilon$. The first-order estimation of $\Delta_\text{max}\mathcal{L}(\vect{w}, S, \mathcal{D})$ and the expectation of the loss change caused by random corruption is
\begin{align}
\Delta_\text{max}\mathcal{L}(\vect{w}, S, \mathcal{D})=\epsilon G+o(\epsilon), \text{\ and}\quad \mathbb{E}_{\|\vect{\tilde a}\|_2 = \epsilon}[\Delta\mathcal{L}(\vect{w}, \vect{\tilde a}; \mathcal{D})]=O\left(\frac{tr(\textbf{H})}{k}\epsilon^2\right).\label{eq:expection1}
\end{align}

Define $\eta=\nicefrac{|\vect{\tilde a}^\text{T}\vect{g}|}{\epsilon G}$, which is an estimation of  $\nicefrac{|\Delta\mathcal{L}(\vect{w}, \vect{\tilde a}, \mathcal{D})| }{\Delta_\text{max}\mathcal{L}(\vect{w}, S, \mathcal{D})}$ and $\eta\in [0, 1]$, then the probability density function $p_\eta(x)$ of $\eta$ and the cumulative density $P(\eta \le x)$ function of $\eta$ are
\begin{align}
p_{\eta}(x)=\frac{2\Gamma(\frac{k}{2})}{\sqrt{\pi}\Gamma(\frac{k-1}{2})}(1-x^2)^{\frac{k-3}{2}}
,\text{ and}\quad
P(\eta \le x)=\frac{2xF_1(\frac{1}{2}, \frac{3-k}{2};\frac{3}{2}; x^2)}{B(\frac{k-1}{2}, \frac{1}{2})}, \label{equ:random_destiny}
\end{align}
where $k$ denotes the number of corrupted parameters, and $\Gamma(\cdot)$, $B(\cdot,\cdot)$ and $F_1(\cdot,\cdot;\cdot;\cdot)$ denote the gamma function, beta function and hyper-geometric function.
\end{thmA}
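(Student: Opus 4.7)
The plan is to prove the three claims in sequence: first the linearized worst-case loss change, then the expectation of the loss change under the uniform random corruption, and finally the probability density and cumulative density of the normalized inner product $\eta$.

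For the maximum loss change, I would start from the second-order Taylor expansion $\Delta\mathcal{L}(\vect{w},\vect{a};\mathcal{D}) = \vect{a}^\text{T}\vect{g} + \tfrac{1}{2}\vect{a}^\text{T}\mathbf{H}\vect{a} + o(\epsilon^2)$. On $S = \{\vect{a}:\|\vect{a}\|_2=\epsilon\}$, the quadratic term is $O(\epsilon^2)$ uniformly, so $\Delta_\text{max}\mathcal{L} = \max_{\|\vect{a}\|_2=\epsilon} \vect{a}^\text{T}\vect{g} + o(\epsilon)$. Cauchy--Schwarz then gives the maximum $\epsilon G$, attained at $\vect{a}^* = \epsilon \vect{g}/G$, establishing the first identity.

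For the expectation, I would first observe that when $p=2$ and $n=k$, the closed form in Eq.~(\ref{equ:random}) collapses to $\vect{\tilde a} = \epsilon \vect{r}/\|\vect{r}\|_2$ with $\vect{r}\sim N(0,\mat{I})$, which is the standard construction of a uniform distribution on the sphere of radius $\epsilon$. By the rotational symmetry of this distribution, $\mathbb{E}[\vect{\tilde a}] = \vect{0}$ and $\mathbb{E}[\vect{\tilde a}\vect{\tilde a}^\text{T}] = (\epsilon^2/k)\mathbf{I}_k$. Plugging into the Taylor expansion eliminates the first-order term and yields $\mathbb{E}[\vect{\tilde a}^\text{T}\mathbf{H}\vect{\tilde a}] = (\epsilon^2/k)\, tr(\mathbf{H})$, proving the second identity.

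For the distribution of $\eta$, I would use that $\vect{\tilde a}/\epsilon$ is uniform on the unit sphere $S^{k-1}$, so $\vect{\tilde a}^\text{T}\vect{g}/(\epsilon G) \stackrel{d}{=} U_1$, the first coordinate of a uniform unit vector (by rotational invariance, we may rotate $\vect{g}/G$ onto $\vect{e}_1$). The marginal density of $U_1$ on $[-1,1]$ is proportional to the $(k-2)$-dimensional surface area of the sphere slice at height $U_1 = x$, namely $(1-x^2)^{(k-3)/2}$; normalizing via the Beta integral $B\bigl(\tfrac{k-1}{2},\tfrac{1}{2}\bigr) = \sqrt{\pi}\,\Gamma\bigl(\tfrac{k-1}{2}\bigr)/\Gamma\bigl(\tfrac{k}{2}\bigr)$ and folding to $[0,1]$ doubles the constant, giving the stated formula for $p_\eta(x)$. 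The CDF follows by integration: the substitution $t = x^2$ turns $\int_0^x (1-t^2)^{(k-3)/2}\,dt$ into an Euler-type integral representation of ${}_2F_1\bigl(\tfrac{1}{2}, \tfrac{3-k}{2}; \tfrac{3}{2}; x^2\bigr)$ multiplied by $x$, after which one divides by the same Beta normalization.

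The main obstacle is the last step: justifying the marginal density calculation cleanly and identifying the resulting integral as the stated hypergeometric function. I would handle the former through either a co-area argument on the sphere or the equivalent Gaussian construction $U_1 \stackrel{d}{=} Z_1/\sqrt{Z_1^2+\cdots+Z_k^2}$, which reduces the marginal law to a scaled square-root of a $\mathrm{Beta}(1/2,(k-1)/2)$ variable. The identification with ${}_2F_1$ is then a direct application of Euler's integral representation, so no delicate estimates are needed; the only care required is the bookkeeping of normalization constants between $\Gamma$, $B$, and the hypergeometric series.
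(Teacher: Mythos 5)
Your proposal is correct and takes essentially the same route as the paper's own proof: Taylor expansion plus Cauchy--Schwarz for the worst case, rotational symmetry of $\vect{\tilde a}=\epsilon\vect{r}/\|\vect{r}\|_2$ giving $\mathbb{E}[\vect{\tilde a}\vect{\tilde a}^\text{T}]=(\epsilon^2/k)\mathbf{I}$ for the trace term, and rotation of $\vect{g}/G$ onto $\vect{e}_1$ for the law of $\eta$. The only cosmetic difference is that the paper derives the marginal of the first coordinate by explicit integration in hyperspherical coordinates, whereas you invoke the known Beta-type law of the first coordinate of a uniform unit vector; both reduce to the same integral $\int_0^x(1-t^2)^{(k-3)/2}\,\diff t$, the same normalization $B\bigl(\tfrac{k-1}{2},\tfrac{1}{2}\bigr)$, and the same Euler-integral identification with the hypergeometric function.
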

%%%%%%%%%%%%%%%%%%%%%%%%%%%%%%%%%%%%%%%%%%%%%%%%%%%%%%%%%%%%%%%%%%%%%%%%%%%%%%%%%%%%%%%%

The detailed definitions of the gamma function, beta function and hyper-geometric function are as follows: $\Gamma(\cdot)$ and $B(\cdot,\cdot)$ denote the gamma function and beta function, and $F_1(\cdot,\cdot;\cdot;\cdot)$ denotes the Gaussian or ordinary hyper-geometric function, which can also be written as $ _2 F_1(\cdot,\cdot;\cdot;\cdot)$:
\begin{align}
\Gamma(z)&=\int_{0}^{+\infty}t^{z-1}e^{-t}dt,\\  B(p, q)&=\int_{0}^{1}t^{p-1}(1-t)^{q-1}dt,\\
F_1(a,b;c;z)&=1+\sum\limits_{n=1}^{+\infty}\frac{a(a+1)\cdots(a+n-1)\times b(b+1)\cdots(b+n-1)}{c(c+1)\cdots(c+n-1)}\frac{z^n}{n!}.
\end{align}

Theorem~\ref{thm:random} states that the expectation of loss change of random corruption is infinitesimal compared to $\Delta_\text{max}\mathcal{L}(\vect{w}, S, \mathcal{D})$ when $\epsilon\to 0$. In addition, it is unlikely for multiple random trials to induce the optimal loss change corresponding to the indicator. For a deep neural network, the number of parameters, which is the upper bound of $k$, can be considerably large. According to Eq.(\ref{equ:random_destiny}), $\eta$ will be concentrated near $0$. Thus, theoretically, it is not generally possible for the random corruption to cause substantial loss changes in this circumstance, making it ineffective in finding vulnerability. 

The conclusion is also empirically validated. In particular, we define $\alpha(p)$ as a real number in $[0, 1]$ satisfying $P(|\Delta\mathcal{L}(\vect{w}, \vect{\tilde a}, \mathcal{D})|<\alpha(p))=p$, which means $|\Delta\mathcal{L}(\vect{w}, \vect{\tilde a}, \mathcal{D})|<\alpha(p)$ keeps with a probability $p$. We then compare the gradient-based corruption and 10,000 random corruptions on a language model and $\epsilon$ is set to $5\times 10^{-4}$. The distribution of the results of the $10,000$ random corruptions are reported in Table~\ref{tab:probability}, as well as the gradient-based corruption result. We can find that the gradient-based corruption can cause a loss change $\epsilon G$ of $0.044$, while the loss change of the random corruption $|\Delta\mathcal{L}(\vect{w}, \vect{\tilde a}, \mathcal{D})|$ is less than $10^{-4}$ with a high probability, which shows a huge difference of more than 400 times ($\eta < 1/400$) in terms of the corrupting effectiveness.

%matlab code:
%matlab code: k=..., t=...
%matlab code: delta_loss = 4.294588 - 4.249727
%matlab code: lambda = delta_loss * 43766.57039236652 * 2
%matlab code: integral(@(x)((1-x.^2).^(k/2-3/2)),0, t) * lambda

\begin{table}[H]
\scriptsize
\centering
\vspace{0.1in}
\begin{tabular}{@{}lccc@{}}
\toprule
 Approach & $\alpha(0.9)$ & $\alpha(0.95)$ & $\alpha(0.995)$ \\ \midrule
 Random corruption (Empirical) & $4.0\times 10^{-5}$ &$4.8\times 10^{-5}$ & $7.0 \times 10^{-5}$  \\
 Random corruption (Theoretical) & $1.5\times 10^{-5}$ & $1.8\times 10^{-5}$ & $2.5 \times 10^{-5}$ \\ \midrule
 gradient-based gradient-based corruption & \multicolumn{3}{c}{$ \Delta_\vect{max}\mathcal{L}(\vect{w}, S, \mathcal{D})\approx\epsilon G=0.044$} \\ 
\bottomrule
\end{tabular}
\caption{Probability distribution of corruption effects $|\Delta\mathcal{L}(\vect{w}, \vect{\tilde a}, \mathcal{D})|$ for random corruption. $\alpha(p)$ satisfies $P(|\Delta\mathcal{L}(\vect{w}, \vect{\tilde a}, \mathcal{D})|<\alpha(p))=p$. Random corruption has small loss changes with high probability while gradient-based gradient-based corruption results in a loss change 400 times larger.
}
\label{tab:probability}
\end{table}

\subsection{Proofs}

\subsubsection{Proof of Theorem~\ref{thm:random}}
\begin{proof}
According to Definition~\ref{def:1}, 
\begin{align}
\vect{\tilde a}=\argmax_{\|\vect{a}\|_2=\epsilon}\vect{a}^\text{T}\vect{r}=\epsilon\frac{\vect{r}}{\|\vect{r}\|_2}
\end{align}
where $\vect{r}\sim N(0, 1)\ (1\le i\le k)$. Note that $\vect{r}$ obeys the Gaussian distribution with a mean vector of zero and a covariance matrix of $I$. Thus, the distribution of $\vect{r}$ has rotational invariance and $\vect{\tilde  a}=\epsilon\frac{\vect{r}}{\|\vect{r}\|_2}$ also has rotational invariance. Therefore, $\vect{\tilde  a}$ obeys a uniform distribution on $\|\vect{\tilde a}\|_2=\epsilon$.

First, We will prove Eq.(\ref{eq:expection1}).
\begin{align}
\Delta\mathcal{L}(\vect{w},\vect{a}; \mathcal{D}) &= \vect{a}^\text{T}\vect{g}+\frac{1}{2}\vect{a}^\text{T}\textbf{H}\vect{a}+o(\epsilon^2)=f(\vect{a})+o(\epsilon), \\
\max\limits_{\|\vect{a}\|_2=\epsilon} f(\vect{a})&=\max\limits_{\|\vect{a}\|_2=\epsilon} \vect{a}^\text{T}\vect{g}=\epsilon G
\end{align}

Therefore,
\begin{align}
\Delta_\text{max}\mathcal{L}(\vect{w}, S, \mathcal{D})=\epsilon G+o(\epsilon)
\end{align}

Suppose $\vect{\tilde a}=(a_1, a_2, \cdots,a_{k-1}, a_{k})^\text{T}, \vect{g}=(g_1, g_2, \cdots,g_{k-1}, g_{k})^\text{T}$ and  $H_{ij}=\nicefrac{\partial^2\mathcal{L}(\vect{w}+\vect{a};\mathcal{D})}{\partial a_i \partial a_j}$. Since  $\vect{\tilde  a}$ obeys a uniform distribution on $\|\vect{\tilde a}\|_2=\epsilon$, by symmetry, we have, 
\begin{align}
\mathbb{E}_{\|\vect{\tilde a}\|_2=\epsilon}[a_i]&=
\mathbb{E}_{\|\vect{\tilde a}\|_2=\epsilon}[a_ia_j]=0\ (i\ne j)\\
\mathbb{E}_{\|\vect{\tilde a}\|_2=\epsilon}[a_i^2]&=
\mathbb{E}_{\|\vect{\tilde a}\|_2=\epsilon}[\frac{\|\vect{a}\|^2}{k}]=\frac{\epsilon^2}{k}
\end{align}

Therefore,
\begin{align}
\mathbb{E}_{\|\vect{\tilde a}\|_2=\epsilon}[\Delta\mathcal{L}(\vect{w}, \vect{\tilde a}, \mathcal{D})] &= \mathbb{E}_{\|\vect{\tilde a}\|_2=\epsilon}[\vect{\tilde a}^\text{T}\vect{g}+\frac{1}{2}\vect{\tilde a}^\text{T}\textbf{H}\vect{\tilde a}+o(\epsilon^2)] \\
&=\mathbb{E}_{\|\vect{\tilde a}\|_2=\epsilon}[\vect{\tilde a}^\text{T}\vect{g}]+\mathbb{E}_{\|\vect{\tilde a}\|_2=\epsilon}[\frac{1}{2}\vect{\tilde a}^\text{T}\textbf{H}\vect{\tilde a}]+o(\epsilon^2)\\
&=\mathbb{E}_{\|\vect{\tilde a}\|_2=\epsilon}[\sum\limits_{i}g_ia_i]+\mathbb{E}_{\|\vect{\tilde a}\|_2=\epsilon}[\frac{1}{2}\sum\limits_{i, j}H_{ij}a_ia_j]+o(\epsilon^2)\\
&=\sum\limits_{i}H_{ii}\frac{\epsilon^2}{2k}+o(\epsilon^2)\\
&=O(\frac{tr(\textbf{H})}{k}\epsilon^2)
\end{align}

Then, we will prove Eq.(\ref{equ:random_destiny}). Because of the rotational invariance of the distribution of $\vect{\tilde  a}$, we may assume $\frac{\vect{g}}{\|\vect{g}\|_2}=(1, 0, 0, \cdots, 0)^\text{T}, \vect{\tilde  a}=(a_1, a_2, a_3,\cdots,a_{k-1}, a_{k})^\text{T}$ and
\begin{equation} 
\left \{
\begin{aligned} 
a_1 &= \epsilon\cos\phi_1\\ 
a_2 &= \epsilon\sin\phi_1\cos\phi_2\\
a_3 &= \epsilon\sin\phi_1\sin\phi_2\cos\phi_3\\
&\quad \cdots\\
a_{k-1} &= \epsilon\sin\phi_1\sin\phi_2\cdots\sin\phi_{k-2}\cos\phi_{k-1}\\
a_{k} &= \epsilon\sin\phi_1\sin\phi_2\cdots\sin\phi_{k-2}\sin\phi_{k-1}\\
\end{aligned} 
\right. 
\end{equation}
where $\phi_i\in[0, \pi]\ (i\ne k-1)$ and $\phi_{k-1}\in[0, 2\pi)$. For $x \in [0, 1]$, define $\alpha = \arccos x$, then:
\begin{equation}
f(\vect{\tilde  a})=\vect{\tilde  a}^\text{T}\vect{g}=\epsilon G\cos\phi_1, P(\eta \le x)=P(|\cos\phi_1| \le x)=2P(0\le\phi_1\le \alpha)
\end{equation}

That is to say,
\begin{align}
P(\eta\le x)&=\frac{2\int_0^{2\pi}\int_0^{\pi}\cdots\int_0^{\alpha}(\sin^{k-2}\phi_{1}\sin^{k-3}\phi_{2}\cdots \sin\phi_{k-2}) d\phi_1\cdots  d\phi_{k-2}d\phi_{k-1}}{\int_0^{2\pi}\int_0^{\pi}\cdots\int_0^{\pi}(\sin^{k-2}\phi_{1}\sin^{k-3}\phi_{2}\cdots \sin\phi_{k-2}) d\phi_1\cdots  d\phi_{k-2}d\phi_{k-1}}\\
&=\frac{2\int_0^{\alpha}\sin^{k-2}\phi_{1}d\phi_1}{\int_0^{\pi}\sin^{k-2}\phi_{1}d\phi_1}=\frac{\int_0^{\alpha}\sin^{k-2}\phi_{1}d\phi_1}{\int_0^{\frac{\pi}{2}}\sin^{k-2}\phi_{1}d\phi_1}=\frac{2\int_0^{\alpha}\sin^{k-2}\phi_{1}d\phi_1}{B(\frac{k-1}{2}, \frac{1}{2})} \label{equ:P}\\
&=\frac{2\cos\alpha F_1(\frac{1}{2}, \frac{3-k}{2};\frac{3}{2}; \cos^2\alpha)}{B(\frac{k-1}{2}, \frac{1}{2})}=\frac{2xF_1(\frac{1}{2}, \frac{3-k}{2};\frac{3}{2}; x^2)}{B(\frac{k-1}{2}, \frac{1}{2})}
\end{align}
and notice that
\begin{align}
\sin\alpha=(1-x^2)^\frac{1}{2}, \big|\frac{d\alpha}{dx}\big|=\frac{1}{(1-x^2)^\frac{1}{2}}, B(p, q)=\frac{\Gamma(p)\Gamma(q)}{\Gamma(p+q)}, \Gamma(\frac{1}{2})=\sqrt{\pi}
\end{align}
then according to Eq.(\ref{equ:P}):
\begin{align}
p_\eta(x)=\frac{2\sin^{k-2}\alpha}{B(\frac{k-1}{2}, \frac{1}{2})}\big|\frac{d\alpha}{dx}\big|=\frac{2\Gamma(\frac{k}{2})}{\sqrt{\pi}\Gamma(\frac{k-1}{2})}(1-x^2)^{\frac{k-3}{2}}
\end{align}
\end{proof}

\subsubsection{Closed-Form Solutions in Definitions}

The close-form solutions of the random parameter corruption in Definition~\ref{def:1} and gradient-based corruption in Definition~\ref{def:2} can be generalized into Proposition~\ref{prop:linear}, which is the maximum of linear function under the corruption constraint.

\begin{prop}[Constrained Maximum]
\label{prop:linear}
Given a vector $\vect{v}\in \mathbb{R}^k$, the optimal $\vect{\hat a}$ that maximizes $\vect{a}^\text{T}\vect{v}$ under the corruption constraint $\vect{a}\in S=\{\vect{a}:\|\vect{a}\|_p= \epsilon\text{ and }\|\vect{a}\|_0\le n\}$ is
\begin{equation}
\vect{\hat a}=\argmax_{\vect{a}\in S}\vect{a}^\text{T}\vect{v}=\epsilon (\text{sgn}(\vect{h})\odot\frac{|\vect{h}|^\frac{1}{p-1}}{\||\vect{h}|^\frac{1}{p-1}\|_p}),\text{ and}\quad \vect{\hat a}^\text{T}\vect{v}=\epsilon\|\vect{h}\|_{\frac{p}{p-1}},
\end{equation}
where $\vect{h}=\text{top}_n(\vect{v})$, retaining top-$n$ magnitude of all $|\vect{v}|$ dimensions and set other dimensions to $0$, $\text{sgn}(\cdot)$ denotes the signum function, $|\cdot|$ denotes the point-wise absolute function, and $(\cdot)^\alpha$ denotes the point-wise $\alpha$-power function.
\end{prop}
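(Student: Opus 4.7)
The plan is to decouple the problem into two nested maximizations: first fix a support set $T\subseteq\{1,\ldots,k\}$ with $|T|\le n$, and optimize the values on $T$ under $\|\vect{a}_T\|_p=\epsilon$; then optimize over the choice of $T$. For the inner problem on a fixed support, the linear objective $\vect{a}^\text{T}\vect{v}=\sum_{i\in T} a_i v_i$ is maximized by the standard $L_p$/$L_{p/(p-1)}$ duality. Concretely, Hölder's inequality yields
\[
\sum_{i\in T} a_i v_i \;\le\; \|\vect{a}\|_p\cdot\|\vect{v}_T\|_{p/(p-1)} \;=\; \epsilon\,\|\vect{v}_T\|_{p/(p-1)},
\]
with equality iff $\text{sgn}(a_i)=\text{sgn}(v_i)$ and $|a_i|^p\propto|v_i|^{p/(p-1)}$ on $T$. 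Solving the equality conditions together with the constraint $\|\vect{a}_T\|_p=\epsilon$ pins down the normalizing constant and produces $a_i=\epsilon\,\text{sgn}(v_i)\,|v_i|^{1/(p-1)}/\bigl\||\vect{v}_T|^{1/(p-1)}\bigr\|_p$ for $i\in T$, and $a_i=0$ otherwise.

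For the outer problem, I need to maximize the resulting objective $\epsilon\,\|\vect{v}_T\|_{p/(p-1)}$ over all $T$ with $|T|\le n$. Because $p/(p-1)\ge 1$, the quantity $\|\vect{v}_T\|_{p/(p-1)}^{p/(p-1)}=\sum_{i\in T}|v_i|^{p/(p-1)}$ is a sum of nonnegative scalars indexed by $T$, and is therefore strictly increased by swapping any coordinate in $T$ for one outside $T$ with strictly larger $|v_i|$. Hence the optimum is attained by taking $T$ to be the indices of the top-$n$ magnitudes of $\vect{v}$, i.e.\ the support of $\vect{h}=\text{top}_n(\vect{v})$. Substituting $\vect{h}$ into the inner-problem formula directly yields the claimed expression for $\vect{\hat a}$ and the value $\vect{\hat a}^\text{T}\vect{v}=\epsilon\|\vect{h}\|_{p/(p-1)}$.

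The only nontrivial points I expect to flag are bookkeeping rather than conceptual. First, the boundary exponents $p=1$ and $p=+\infty$ have to be handled as limits of the formula (for $p\to 1$ the mass concentrates on a single top coordinate, while for $p\to\infty$ one has $|h_i|^{1/(p-1)}\to 1$ on the support, recovering a signed-indicator vector); the Hölder/duality step is a standard fact in both endpoint cases. Second, ties among the top-$n$ magnitudes mean the optimizer need not be unique, but any tie-breaking choice still satisfies the stated formula. Neither issue affects correctness, so the proof reduces to (i) invoking Hölder with its equality condition and (ii) the obvious monotonicity that selects top-$n$ coordinates.
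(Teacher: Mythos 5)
Your proof is correct and follows essentially the same route as the paper's: Hölder's inequality with its equality condition on a fixed support, combined with the observation that the top-$n$ magnitudes of $\vect{v}$ maximize $\|\vect{v}_T\|_{p/(p-1)}$ over admissible supports. If anything, your explicit two-stage decomposition makes the support-selection step cleaner than the paper's one-line chain, which writes $\|\vect{a}\|_p\|\mathbf{P}\vect{v}\|_q=\epsilon\|\vect{h}\|_{p/(p-1)}$ for an arbitrary support matrix $\mathbf{P}$ and leaves the maximization over $\mathbf{P}$ implicit.
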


\begin{proof}
When $\vect{a}\in S=\{\vect{a}:\|\vect{a}\|_p=\epsilon\text{ and }\|\vect{a}\|_0\le n\}$, define $\vect{a}=\textbf{P}\vect{b}$, where $\textbf{P}$ is a diagonal $0/1$ matrix with $n$ ones. It is easy to verify $\textbf{P}^\text{T}=\textbf{P}=\textbf{P}^2$. Define $q=\frac{p}{p-1}, \frac{1}{p}+\frac{1}{q}=1$ here. Then according to Holder Inequality, for $\frac{1}{p}+\frac{1}{q}=1,(1\le p, q\le+\infty)$, 
\begin{align}
\vect{a}^\text{T}\vect{v}=\vect{b}^\text{T}\textbf{P}\vect{v}=\vect{b}^\text{T}\textbf{P}\textbf{P}\vect{v}=\vect{a}^\text{T}(\textbf{P}\vect{v})\le\|\vect{a}\|_p\|\textbf{P}\vect{v}\|_q=\epsilon\|\vect{h}\|_{\frac{p}{p-1}}
\end{align}
where $\vect{h}=\textbf{M}\vect{v}=\text{top}_n(\vect{v})$, $\textbf{M}$ is a diagonal $0/1$ matrix and $M_{j,j}=1$ if and only if $|\vect{v}|_j$ is in the top-$n$ magnitude of all $|\vect{v}|$ dimensions. The equation holds if and only if,
\begin{align}
\vect{\hat a}=\epsilon(\text{sgn}(\vect{h})\odot\frac{|\vect{h}|^{\frac{1}{p-1}}}{\||\vect{h}|^{\frac{1}{p-1}}\|_p})
\end{align}
and the maximum value of $\vect{a}^\text{T}\vect{v}$ is $\vect{\hat a}^\text{T}\vect{v}=\epsilon\|\vect{h}\|_{\frac{p}{p-1}}$.
\end{proof}

Proposition~\ref{prop:linear} indicates that the maximization of the value $\vect{a}^\text{T}\vect{v}$ under the corruption constraint has a closed-form solution. The solutions to a special case, where $p=+\infty$ are shown below.

\begin{cor}
\label{prop:inf}
When $p=+\infty$, the solution in Proposition~\ref{prop:linear} is,
\begin{equation}
\vect{\hat a}=\lim\limits_{p\to +\infty}\epsilon (\text{sgn}(\vect{h})\odot\frac{|\vect{h}|^\frac{1}{p-1}}{\||\vect{h}|^\frac{1}{p-1}\|_p})=\epsilon\text{sgn}(\vect{h}),\text{\ and}\quad \vect{\hat a}^\text{T}\vect{v}=\epsilon\|\vect{h}\|_1
\end{equation}
\end{cor}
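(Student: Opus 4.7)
The plan is to obtain the corollary as the $p \to \infty$ limit of the closed-form expression already proved in Proposition~\ref{prop:linear}, computing the limit factor by factor, and then cross-checking with a direct Hölder argument at the conjugate pair $(\infty, 1)$.

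First I would analyze $|\vect{h}|^{1/(p-1)}$ coordinatewise: for $h_i \neq 0$ the entry tends to $1$ as $p \to \infty$, while for $h_i = 0$ it is identically $0$, so $|\vect{h}|^{1/(p-1)}$ converges coordinatewise to the indicator of the support of $\vect{h}$. Next, for the denominator $\||\vect{h}|^{1/(p-1)}\|_p = (\sum_i |h_i|^{p/(p-1)})^{1/p}$, the exponent $p/(p-1) \to 1$ forces the inner sum to converge to $\|\vect{h}\|_1$, a positive finite constant whenever $\vect{h} \neq \mathbf{0}$, and the outer $1/p$ power then drives the whole expression to $1$ (since $\log(\|\vect{h}\|_1)/p \to 0$). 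Combining these, and noting that $\text{sgn}(0) = 0$ automatically annihilates any coordinate outside the support of $\vect{h}$, the normalized vector converges to $\text{sgn}(\vect{h})$, giving $\vect{\hat a} = \epsilon\,\text{sgn}(\vect{h})$. For the objective value, Proposition~\ref{prop:linear} already states $\vect{\hat a}^\text{T}\vect{v} = \epsilon\|\vect{h}\|_{p/(p-1)}$, and continuity of $q \mapsto \|\vect{h}\|_q$ at $q = 1$ in finite dimension yields the limit $\epsilon\|\vect{h}\|_1$.

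The only real subtleties are the two indeterminate features of the formula---the $0^{1/(p-1)}$ entries in the numerator and the $1/p$ outer exponent applied to a quantity converging to a positive constant---both of which resolve cleanly once the support of $\vect{h}$ is separated from its complement. As an independent sanity check I would rederive the same result at $p = \infty$ directly: for any $\vect{a}$ with $\|\vect{a}\|_\infty = \epsilon$ and $\|\vect{a}\|_0 \le n$, restricting to a chosen $n$-element support $T$ and applying Hölder at $(\infty, 1)$ gives $\vect{a}^\text{T}\vect{v} \le \epsilon \sum_{i \in T}|v_i|$, which is maximized by taking $T$ to be the indices of the top-$n$ magnitudes of $|\vect{v}|$ and setting $a_i = \epsilon\,\text{sgn}(v_i)$ on $T$, matching the limiting expression exactly.
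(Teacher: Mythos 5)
Your proposal is correct and follows essentially the same route as the paper: take the $p\to+\infty$ limit of the closed form from Proposition~\ref{prop:linear}, using $|\vect{h}|^{1/(p-1)}\to\mathbb{I}(\vect{h}\ne 0)$ and continuity of $q\mapsto\|\vect{h}\|_q$ at $q=1$. Your treatment of the denominator (splitting the inner sum's limit $\|\vect{h}\|_1$ from the vanishing outer exponent $1/p$) and the direct H\"older check at $(\infty,1)$ are in fact more careful than the paper's proof, which only addresses the numerator's limit explicitly.
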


\begin{proof}
In Proposition~\ref{prop:linear}, when $p\to +\infty$, $0^{\frac{1}{p-1}}\to 0, x^{\frac{1}{p-1}}\to 1\ (x\ne 0)$ and $|\vect{h}|^{\frac{1}{p-1}}\to \mathbb{I}(\vect{h}\ne 0)$. Then 
\begin{align}
\vect{\hat a}=\lim\limits_{p\to +\infty }\epsilon (\text{sgn}(\vect{h})\odot\frac{|\vect{h}|^{\frac{1}{p-1}}}{\||\vect{h}|^{\frac{1}{p-1}}\|_p})=\epsilon \text{sgn}(\vect{h})
\end{align}
and the maximum value of $\vect{a}^\text{T}\vect{v}$ is $\vect{\hat a}^\text{T}\vect{v}=\epsilon\|\vect{h}\|_{\frac{p}{p-1}}=\epsilon\|\vect{h}\|_{1}$.

\end{proof}

\subsubsection{Proof of Theorem~\ref{thm:bound}}

\begin{thmA}[Error Bound of the Gradient-based Estimation]
Suppose $\mathcal{L}(\vect{w};\mathcal{D})$ is convex and $L$-smooth with respect to $\vect{w}$ in the subspace $\{\vect{w}+\vect{a}:\vect{a}\in S\}$, where $S=\{\vect{a}:\|\vect{a}\|_p=\epsilon\text{ and }\|\vect{a}\|_0\le n\}$.\footnote{Note that $\mathcal{L}$ is only required to be convex and $L$-smooth in a neighbourhood of $\vect{w}$, instead of the entire $\mathbb{R}^k$.} Suppose $\vect{a^*}$ and $\vect{\hat a}$ are the optimal corruption and the gradient-based corruption in $S$ respectively. $\|\vect{g}\|_2=G>0$. It is easy to verify that $\mathcal{L}(\vect{w}+\vect{a^*};\mathcal{D})\ge \mathcal{L}(\vect{w+\vect{\hat a}};\mathcal{D})>\mathcal{L}(\vect{w};\mathcal{D})$ . It can be proved that the loss change of gradient-based corruption is the same order infinitesimal of that of the optimal parameter corruption:
\begin{equation}
\frac{\Delta_\text{max}\mathcal{L}(\vect{w}, S; \mathcal{D})}{\Delta\mathcal{L}(\vect{w}, \vect{\hat a}; \mathcal{D})}=1+O\left(\frac{Ln^{g(p)}\sqrt{k}\epsilon}{G}\right);
\label{eq:bound1}
\end{equation}
where $g(p)$ is formulated as $g(p)=\max\{\frac{p-4}{2p}, \frac{1-p}{p}\}$.
\end{thmA}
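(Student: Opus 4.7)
The plan is to control the ratio $\Delta_{\text{max}}\mathcal{L}(\vect w, S;\mathcal{D})/\Delta\mathcal{L}(\vect w, \vect{\hat a};\mathcal{D})$ by bounding the numerator from above through $L$-smoothness and the denominator from below through convexity, and then translating each bound into powers of $n$, $k$, $\epsilon$, $L$, $G$ using elementary norm comparisons for vectors whose support has size at most $n$. Equivalently, the target is to bound $\bigl[\Delta_{\text{max}}\mathcal{L}(\vect w, S;\mathcal{D}) - \Delta\mathcal{L}(\vect w, \vect{\hat a};\mathcal{D})\bigr] / \Delta\mathcal{L}(\vect w, \vect{\hat a};\mathcal{D})$.

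First I would use convexity of $\mathcal{L}$ at $\vect w$ to obtain $\Delta\mathcal{L}(\vect w, \vect{\hat a};\mathcal{D}) \geq \vect{\hat a}^{\text T}\vect g = \epsilon\|\vect h\|_{p/(p-1)}$, where $\vect h = \text{top}_n(\vect g)$ and the equality is Definition~\ref{def:2}. The $L$-smoothness assumption gives $\Delta\mathcal{L}(\vect w, \vect{a}^*;\mathcal{D}) \leq \vect{a}^{*\text T}\vect g + (L/2)\|\vect{a}^*\|_2^2$. Since $\vect{\hat a}$ by construction maximises $\vect a^{\text T}\vect g$ over $S$, the inner product $\vect{a}^{*\text T}\vect g$ may be replaced by the larger quantity $\vect{\hat a}^{\text T}\vect g$, and subtracting the convexity bound then yields
\[
\Delta_{\text{max}}\mathcal{L}(\vect w, S;\mathcal{D}) - \Delta\mathcal{L}(\vect w, \vect{\hat a};\mathcal{D}) \leq \tfrac{L}{2}\|\vect{a}^*\|_2^2,
\]
so the relative gap is at most $L\|\vect{a}^*\|_2^2 / \bigl(2\epsilon\|\vect h\|_{p/(p-1)}\bigr)$.

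Next I would convert the two norms on support-$n$ vectors into the advertised $n^{g(p)}\sqrt k\,\epsilon/G$ shape. For the numerator, monotonicity of $L_p$-norms on finite supports gives $\|\vect{a}^*\|_2 \leq \|\vect{a}^*\|_p = \epsilon$ when $p \leq 2$, and Holder yields $\|\vect{a}^*\|_2 \leq n^{(p-2)/(2p)}\epsilon$ when $p \geq 2$. For the denominator, the basic observation that the top-$n$ entries of $\vect g$ carry at least their average share of the squared mass gives $\|\vect h\|_2^2 \geq (n/k)\|\vect g\|_2^2$, i.e.\ $\|\vect h\|_2 \geq n^{1/2}G/\sqrt k$; when $q = p/(p-1) \geq 2$ the power-mean inequality sharpens this to $\|\vect h\|_q \geq n^{1/q-1/2}\|\vect h\|_2 \geq n^{(p-1)/p}G/\sqrt k$, whereas when $q \leq 2$ monotonicity of $L_q$-norms only yields $\|\vect h\|_q \geq \|\vect h\|_2 \geq n^{1/2}G/\sqrt k$. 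Substituting and simplifying exponents shows the $n$-exponent equals $-(p-1)/p = (1-p)/p$ in the regime $p \leq 2$ and $(p-2)/p - 1/2 = (p-4)/(2p)$ in the regime $p \geq 2$, which together deliver $g(p) = \max\{(p-4)/(2p), (1-p)/p\}$ and hence the claimed bound.

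The main obstacle will be identifying and proving the right lower bound on $\|\vect h\|_{p/(p-1)}$: the worst-case gradient distribution under the constraint $\|\vect g\|_2 = G$ behaves qualitatively differently above and below $q = 2$, so two genuinely different inequalities (power-mean for $q \geq 2$; $L_q$-norm monotonicity for $q \leq 2$) are needed, and this is precisely the source of the two branches appearing in $g(p)$. A secondary technical point is that the clean convexity-based lower bound $\Delta\mathcal{L}(\vect w, \vect{\hat a};\mathcal{D}) \geq \vect{\hat a}^{\text T}\vect g$ is essential; the $L$-smoothness-only lower bound $\vect{\hat a}^{\text T}\vect g - (L/2)\|\vect{\hat a}\|_2^2$ would pollute the denominator with an $\epsilon^2$ term and degrade the final rate.
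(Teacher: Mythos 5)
Your proposal is correct and follows essentially the same route as the paper's proof: lower-bound the denominator by $\vect{\hat a}^{\text{T}}\vect{g}=\epsilon\|\vect{h}\|_{p/(p-1)}$ via convexity, upper-bound the numerator by $\vect{\hat a}^{\text{T}}\vect{g}+\frac{L}{2}\|\vect{a}^*\|_2^2$ via $L$-smoothness together with the maximality of $\vect{\hat a}$ for the inner product, and convert the norms using exactly the two-regime comparison $\|\vect{x}\|_2\le\max\{1,n^{1/2-1/r}\}\|\vect{x}\|_r$ for support-$n$ vectors that the paper isolates as a standalone lemma. The only cosmetic difference is that the paper packages both regimes into constants $\beta_p,\beta_q$ and takes the maximum of the exponents at the end, whereas you split the cases $p\le 2$ and $p\ge 2$ explicitly; the resulting exponents agree.
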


Theorem~\ref{thm:bound} guarantees when perturbations to model parameters are small enough, the gradient-based corruption can accurately estimate the indicator with small errors. In Eq.(\ref{eq:bound1}), the numerator is the proposed indicator, which is the maximum loss change caused by parameter corruption, and the denominator is the loss change with the parameter corruption generated by the gradient-based method. As we can see, when  $\epsilon$, the $p$-norm of the corruption vector, tends to zero, the term $O(\cdot)$ will also tend to zero such that the 
ratio becomes one, meaning the gradient-based method is an infinitesimal estimation of the indicator.

\begin{proof}
Define $q=\frac{p}{p-1}, \frac{1}{p}+\frac{1}{q}=1$ here. 

We introduce a lemma first.
\begin{lem}
For vector $x\in \mathbb{R}^k$, $\|\vect{x}\|_0\le n \le k$, for any $r>1$, $\|\vect{x}\|_2\le \beta_r \|\vect{x}\|_r$, where $\beta_r=\max\{1, n^{1/2-1/r}\}$. 
\label{lemma:norm}
\end{lem}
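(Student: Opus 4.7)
The plan is to split the argument on the sign of $1/2-1/r$, which decides which direction of the power-mean comparison is needed, and to confine the sparsity input to the at-most-$n$ non-zero coordinates of $\vect{x}$.

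In the regime $1 < r \le 2$, one has $\beta_r = 1$, so the target reduces to the standard monotonicity $\|\vect{x}\|_2 \le \|\vect{x}\|_r$. This inequality holds for every $\vect{x}\in\mathbb{R}^k$ without any sparsity assumption: it is the statement that $\ell^p$-norms on counting measure are decreasing in $p$, which one verifies by normalising so that $\|\vect{x}\|_r = 1$ (forcing $|x_i|\le 1$) and then observing $|x_i|^2 \le |x_i|^r$ coordinate-wise.

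In the regime $r \ge 2$, let $T = \{i : x_i \ne 0\}$ and set $m := |T| \le n$. I would apply Holder's inequality on $T$ with conjugate exponents $r/2$ and $r/(r-2)$ to the pointwise product $|x_i|^2 \cdot 1$:
\begin{equation*}
\|\vect{x}\|_2^{\,2} \;=\; \sum_{i \in T} |x_i|^2 \;\le\; \biggl(\sum_{i \in T} |x_i|^r\biggr)^{\!2/r}\biggl(\sum_{i \in T} 1\biggr)^{\!(r-2)/r} \;=\; \|\vect{x}\|_r^{\,2}\, m^{1-2/r}.
\end{equation*}
Taking square roots, and using $m \le n$ together with $1/2 - 1/r \ge 0$, yields $\|\vect{x}\|_2 \le n^{1/2-1/r}\|\vect{x}\|_r$, which matches $\beta_r$ in this regime.

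The two cases glue at $r = 2$ (where $n^{1/2-1/r}=1$) and combine into $\beta_r = \max\{1, n^{1/2-1/r}\}$. Because each case reduces to a one-line application of Holder/Jensen, I do not expect a substantive obstacle; the only points requiring care are checking that the Holder exponents $r/2$ and $r/(r-2)$ are conjugate and finite for $r > 2$ (with $r = 2$ reducing to a trivial equality), and making sure the sparsity bound enters only through the counting factor $m^{(r-2)/r}$, which is harmlessly dominated by $n^{(r-2)/r}$ since the exponent is non-negative.
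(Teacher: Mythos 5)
Your proof is correct and follows essentially the same route as the paper's: the identical case split at $r=2$, with the plain monotonicity $\|\vect{x}\|_2\le\|\vect{x}\|_r$ for $1<r\le 2$ and a sparsity counting factor $n^{1/2-1/r}$ for $r\ge 2$. The only cosmetic differences are that you justify the first case by normalizing $\|\vect{x}\|_r=1$ rather than by the concavity/subadditivity of $t\mapsto t^{r/2}$, and the second by H\"older's inequality with exponents $(r/2,\,r/(r-2))$ rather than by the power-mean inequality; these are interchangeable one-line arguments for the same standard facts.
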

\begin{proof}[Proof of Lemma~\ref{lemma:norm}]
We may assume $\vect{x}=(x_1, x_2, \cdots, x_k)^\text{T}$ and $x_{n+1}=x_{n+2}=\cdots=x_{k}=0$. Then $\|\vect{x}\|_r=\big(\sum\limits_{i=1}^n|x_i|^r\big)^{\frac{1}{r}}$.

When $1<r<2$, define $t=\frac{r}{2}<1$ and $h(x)=x^t+(1-x)^t$, $h''(x)=t(t-1)(x^{t-2}+(1-x)^{t-2})<0$, thus $h(x)\ge \max\{h(0), h(1)\}=1\ (x\in[0, 1])$. 

Then for $a, b\ge 0$ and $a+b>0$, we have $\frac{a^t+b^t}{(a+b)^t}=(\frac{a}{a+b})^t+(1-\frac{a}{a+b})^t=h(\frac{a}{a+b})\ge 1$. That is to say, $a^t+b^t \ge (a+b)^t$. More generally, $a^t+b^t+\cdots+c^t \ge (a+b+\cdots+c)^t$. Therefore,
\begin{align}
\|\vect{x}\|_r=\big(\sum\limits_{i=1}^n|x_i|^{r}\big)^{\frac{1}{r}}=\big(\sum\limits_{i=1}^n(|x_i|^2)^\frac{r}{2}\big)^{\frac{1}{r}} \ge \big((\sum\limits_{i=1}^n|x_i|^2)^\frac{r}{2}\big)^{\frac{1}{r}} = \|\vect{x}\|_2
\end{align}

When $r\ge 2$, according to the power mean inequality,
\begin{align}
\|\vect{x}\|_r=\big(\sum\limits_{i=1}^n|x_i|^{r}\big)^{\frac{1}{r}}=n^\frac{1}{r} \big(\frac{\sum\limits_{i=1}^n|x_i|^r}{n}\big)^{\frac{1}{r}} \ge n^\frac{1}{r} \big(\frac{\sum\limits_{i=1}^n|x_i|^2}{n}\big)^{\frac{1}{2}}=  n^{\frac{1}{r}-\frac{1}{2}}(\sum\limits_{i=1}^n|x_i|^2)^\frac{1}{2} = n^{\frac{1}{r}-\frac{1}{2}}\|\vect{x}\|_2
\end{align}

To conclude, $\|\vect{x}\|_2\le \beta_r \|\vect{x}\|_r$, where $\beta_r=\max\{1, n^{1/2-1/r}\}$.
\end{proof}

According to Lemma~\ref{lemma:norm}, notice that $\|\vect{a}^*\|_0\le n$, define $\vect{h}=\text{top}_n(\vect{g})$, then  $\|\vect{h}\|_2\ge\frac{n}{k}\|\vect{g}\|_2$ we have, 
\begin{align}
\|\vect{a}^*\|_2 \le \beta_p\|\vect{a}^*\|_p\le \beta_p\epsilon
,\quad 
\|\vect{h}\|_q\ge\frac{\|\vect{h}\|_2}{\beta_q}\ge \frac{\|\vect{g}\|_2}{\beta_q}\sqrt{\frac{n}{k}}= \frac{G}{\beta_q}\sqrt{\frac{n}{k}}
\end{align}

Since $\mathcal{L}(\vect{w};\mathcal{D})$ is convex and $L$-smooth in $\vect{w}+S$,
\begin{align}
\Delta\mathcal{L}(\vect{w}, \vect{\hat a}, \mathcal{D}))&\ge \vect{g}^\text{T}\vect{\hat a}=\epsilon \|\vect{h}\|_q\\
\Delta\mathcal{L}(\vect{w}, \vect{a^*}, \mathcal{D})&\le \vect{g}^\text{T}\vect{a}^*+\frac{L}{2}\|\vect{a}^*\|_2^2 = \epsilon \|\vect{h}\|_q+\frac{L}{2}\|\vect{a}^*\|_2^2 \end{align}

Therefore,
\begin{align}
\text{Left Hand Side}=\frac{\Delta\mathcal{L}(\vect{w}, \vect{a^*}, \mathcal{D})}{\Delta\mathcal{L}(\vect{w}, \vect{\hat a}, \mathcal{D})}
\le\frac{\epsilon \|\vect{h}\|_q+\frac{L}{2}\|\vect{a}^*\|_2^2}{\epsilon \|\vect{h}\|_q}
\le1+\frac{L\beta_p^2\epsilon}{2\|\vect{h}\|_q}
\le1+\frac{L\beta_p^2\beta_q\epsilon\sqrt{k}}{2G\sqrt{n}}
\end{align}

When $p\ge 2,q\le 2$,  $\beta_p^2\beta_q=n^{1-2/p}$, and when $p\le 2,q\ge 2$, $\beta_p^2\beta_q=n^{1/2-1/q}=n^{1/p-1/2}$. To conclude, $\beta_p^2\beta_q=\max\{n^{1-2/p}, n^{1/p-1/2}\}=n^{\max\{1-2/p, 1/p-1/2\}}$. Therefore,
\begin{align}
\text{Left Hand Side}\le 1+\frac{Ln^{\max\{1-2/p, 1/p-1/2\}}\sqrt{k}}{2G\sqrt{n}}\epsilon = 1+O\left(\frac{Ln^{g(p)}\sqrt{k}\epsilon}{G}\right)
\end{align}
where $g(p)=\max\{\frac{p-4}{2p}, \frac{1-p}{p}\}$.
\end{proof}

\subsection{Computational Complexity of Parameter Corruption}
For the parameter corruption, the closed-form solution is determined by: (1) the set of allowed corruptions $S$, i.e., $n, k, \epsilon$ and $p$; (2) the vector $\vect{v}$, corresponding to $\vect{r}$ in random corruption and $\vect{g}$ in gradient-based corruption. 
The first condition has a computational cost of $O(k \log n)$, because a top-$n$ check is involved, which is at least $O(k \log n)$, and other calculations, such as multiplication and inversion, are less than $O(k)$. 
For the second condition, sampling $\vect{r}$ should be trivial on modern computers but obtaining $\vect{g}$ with respect to the whole dataset can be costly, generally of cost $O(k|\mathcal{D}|)$. 

\subsection{Model Implementation}
This section shows the implementation details of neural networks used in our experiments. Experiments are conducted on a GeForce GTX TITAN X GPU.

\subsubsection{ResNet}
We adopt Resnet-34~\cite{he2016deep} as the baseline on two benchmark datasets: CIFAR-10~\cite{torralba200880}  and ImageNet~\cite{imagenet}. CIFAR-10\footnote{CIFAR-10 can be found at \url{https://www.cs.toronto.edu/~kriz/cifar.html}} is an image classification dataset with 10 categories and consists of 50000 training images and 10000 test images. The images are of 32-by-32 pixel size with 3 channels. We adopt the classification accuracy as our evaluation metric on CIFAR-10. For ImageNet\footnote{ImageNet can be found at \url{http://www.image-net.org}}, it contains $10M$ labeled images of more than $10k$ categories as the training dataset and $50k$ images as the validation dataset. Note that the gradient-based parameter corruption on ResNet on ImageNet uses gradients from the validation set due to the sheer size of ImageNet, while other experiments use gradients from the training set. We adopt the accuracy (Acc) on the validation dataset as our evaluation metrics on ImageNet.
We adopt SGD as the optimizer with a mini-batch size of 128. The learning rate is 0.1; the momentum is 0.9; and the weight-decay is $5\times 10^{-4}$. We train the model for $200$ epochs. We also apply data augmentation for training following~\cite{he2016deep}: 4 pixels are padded on each side, and a $32 \times 32$ crop is randomly sampled from the padded image or its horizontal flip.

\subsubsection{Transformer}
We implement the ``transformer\_iwslt\_de\_en'' provided by fairseq~\cite{ott2019fairseq} on German-English translation dataset. We use the same dataset splits following \citet{ott2019fairseq,DBLP:journals/corr/RanzatoCAZ15,DBLP:conf/emnlp/WisemanR16}. The dataset and the implementation of ``transformer\_iwslt\_de\_en'' can be found at the fairseq repository\footnote{\url{https://github.com/pytorch/fairseq}}. The dataset contains 160K sentences for training, 7K sentences for validation, and 7K sentences for testing. BPE is used to get the vocabulary. We adopt the BLEU score as the evaluation metric on the translation task. The dropout rate is $0.4$. The training batch size is $4000$ tokens and we update the model for every $4$ step. We adopt a learning rate schedule with an initial learning rate of $10^{-7}$ and a base learning rate of $0.001$. The weight decay rate is $0.0001$. The number of warmup steps is $4000$. We set the test beam-size to $5$. We train the model for $70$ epochs. We adopt the checkpoint-average mechanism for evaluation and the last $10$ checkpoints are averaged. 

\subsubsection{LSTM} 
We use a 3-layer LSTM as a language model following \cite{merityRegOpt,merityAnalysis} on the word-level Penn TreeBank dataset (PTB)\footnote{PTB can be found at \url{https://www.kaggle.com/nltkdata/penn-tree-bank?select=ptb}}\citep{DBLP:journals/coling/MarcusSM94}. The dataset has been preprocessed and the vocabulary size is limited to $10000$ words. We adopt the log perplexity on the test set as the evaluation metric on PTB.
We use the Adam optimizer and initialize the learning rate with $0.001$. The embedding size is set to $400$ and the hidden size is set to $1150$. We train the model for $70$ epochs. We use a weight-decay of $1.2*10^{-6}$. 

\subsubsection{MLP}
We implement a MLP-based parser following \citet{DBLP:conf/emnlp/ChenM14} on a transition-based dependency parsing dataset provided by
the English Penn TreeBank (PTB)~\citep{DBLP:journals/coling/MarcusSM94}. We follow the standard splits of the dataset and adopt Unlabeled Attachment Score (UAS) as the evaluation metric.
The hidden size is set to $50$ and the batch size is $1024$. The word embeddings are initialized as a $50$-dim GloVe~\citep{glove} embeddings. We use the Adam optimizer with an initial learning rate of $0.001$ and a learning rate decay of $0.9$. We train the model for $20$ epochs. We evaluate the model on the development set every epoch and find the best checkpoint to evaluate the test results. The dropout rate is $0.2$.

\subsection{Detailed Experimental Settings and Supplemental Results of the Proposed Parameter Corruption Approach}

On PTB-Parsing, we adopt the $L_2$ gradient regularization and $\lambda=0.01$. On Tiny-ImageNet, CIFAR-10, PTB-LM and De-En, we adopt the variant of $\mathcal{L}_a$ by directly optimizing $\mathcal{L}^*$, instead of adopting the gradient regularization term. We set $\alpha=0.5$ and choose different $\epsilon$ to control the magnitude of $\vect{\hat a}$. We set $\alpha=0.5$ and choose different $\epsilon$ to control the magnitude of $\vect{\hat a}$. On Tiny-ImageNet, we choose $L_{+\infty}$-norm and $\epsilon=0.00002$. On CIFAR-10, we choose $L_2$-norm and $\epsilon=0.5$. On PTB-LM, we choose $L_{+\infty}$-norm and $\epsilon=0.1$. On De-En, we choose $L_{+\infty}$-norm and $\epsilon=0.0006$ and we start corruption-resistant training after $30$ epochs. On PTB-Parsing, we adopt $L_2$ gradient regularization and $\lambda=0.01$. Other settings are the same as the baseline models.

In Figure~\ref{fig:def}, we show the influence of the magnitude of the virtual gradient-based corruption to the accuracy.

\subsection{Supplemental Experimental Results of the Proposed Parameter Corruption Approach}

In this section, we report supplemental experimental results of our proposed parameter corruption approach to analyze and visualize the weak points of selected deep neural networks, which illustrates the divergent vulnerability of different components within a neural network.

\subsubsection{Vulnerability in Terms of Parameter Positions}

In our paper, we visualize the influence of parameter positions on the vulnerability of deep neural networks by applying the gradient-based gradient-based corruption consecutively to layers in ResNet-34 and Transformer. In this section, we report some experimental results in Figure~\ref{fig:Layer_attack1}, Table~\ref{tab:results} and Table~\ref{tab:results_layer} as the supplemental experimental results of our work. Several observations can be drawn: 1) Lower layers in ResNet-34 are less robust to parameter corruption and are prone to cause overall damage; 2) Upper layers in Transformer decoder are less robust to parameter corruption while the parameter corruption exhibits inconspicuous trend for Transformer encoder.

\begin{figure}[H]
\centering
\subcaptionbox{ResNet-34 (ImageNet)}{\includegraphics[width=0.22\linewidth]{pics/2/imagenet_hot.pdf}}
\hfil
\subcaptionbox{ResNet-34 (CIFAR-10)}{\includegraphics[width=0.22\linewidth]{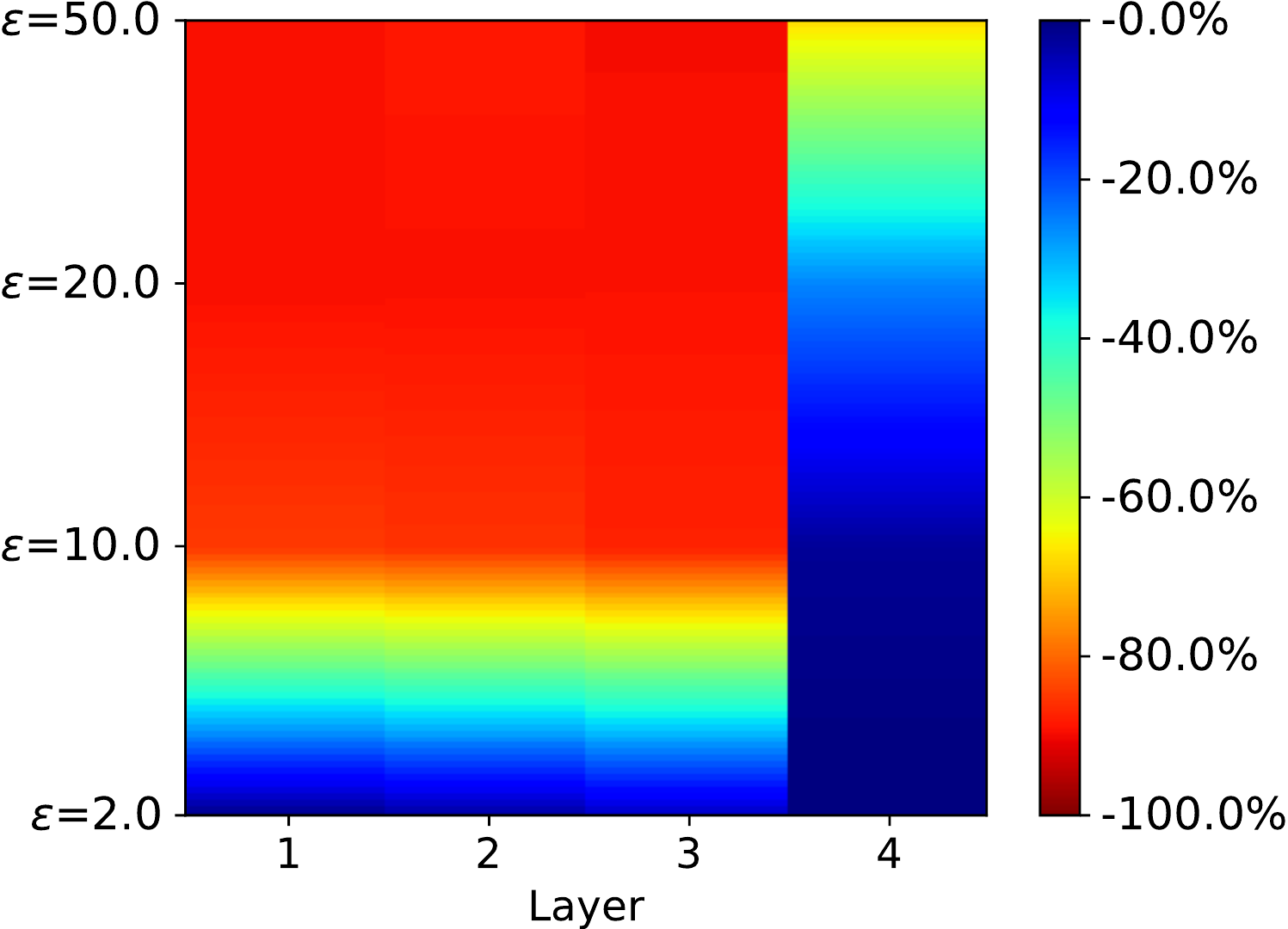}}
\hfil
\subcaptionbox{Transformer Decoder}{\includegraphics[width=0.22\linewidth]{pics/2/fairseq_decoder_hot.pdf}}
\hfil
\subcaptionbox{Transformer Encoder}{\includegraphics[width=0.22\linewidth]{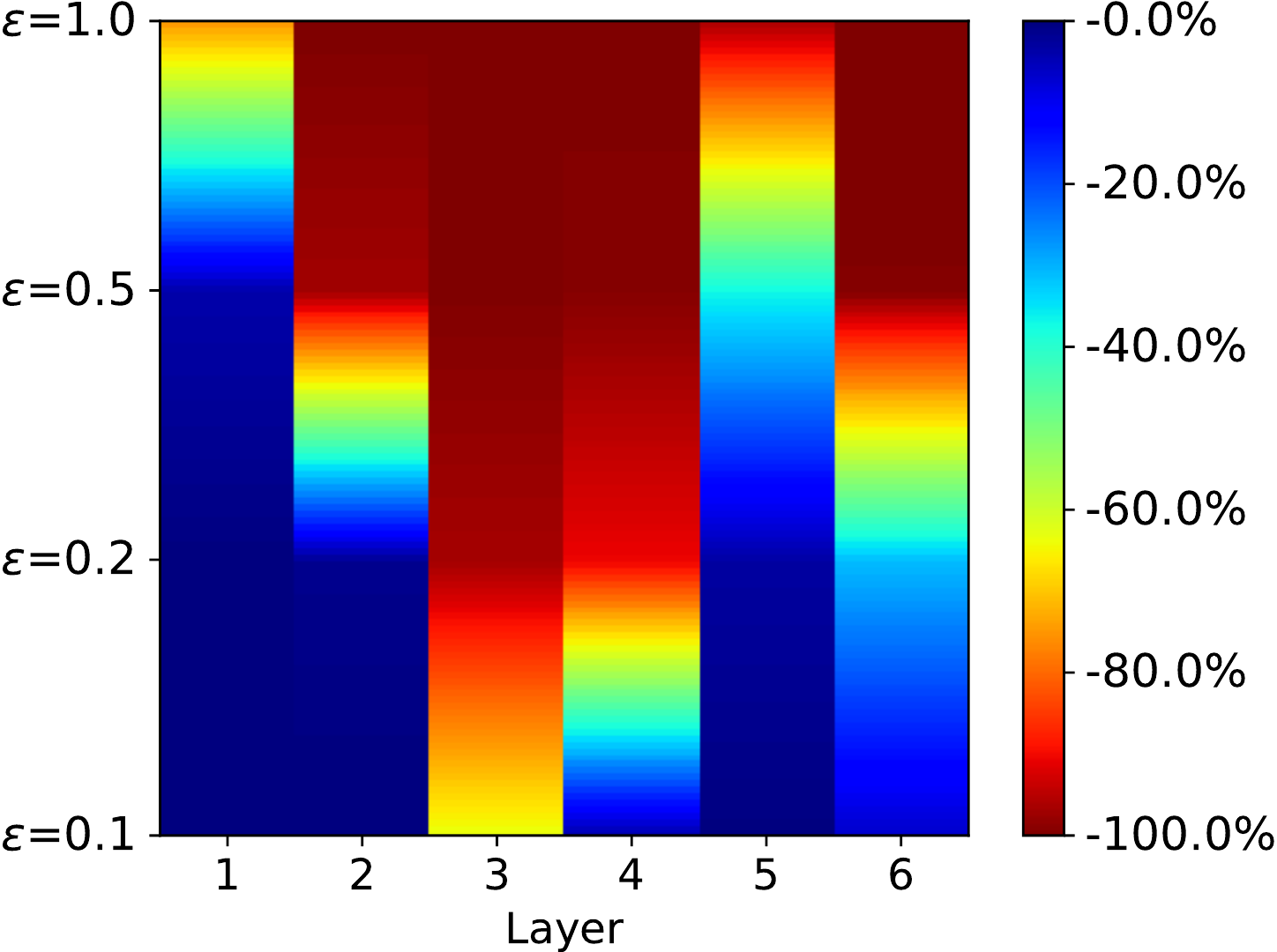}}
\caption{
Results of gradient-based gradient-based corruption on different layers in ResNet-34 and Transformer ($n=100$, $L_{+\infty}$-norm). Warmer colors indicate significant performance degradation. For example, ``red'' means the most significant performance degradation.  Results show that different neural networks exhibit diverse characteristics. Particularly, lower layers in ResNet-34 are more robust to parameter corruption while Transformer decoder shows the opposite trend. 
}
\label{fig:Layer_attack1}
\end{figure}

\begin{table}[H]
\scriptsize
\setlength{\tabcolsep}{4pt}
\centering
\begin{tabular}{llccccllcccc}
\toprule
 \textbf{CIFAR-10} & Layer1 & Layer2 & Layer3 & Layer4 & \textbf{ImageNet} & Layer1 & Layer2 & Layer3 & Layer4 \\
\cmidrule(r){1-5}\cmidrule{6-10}
 w/o Corruption & \multicolumn{4}{c}{94.3 $\star$} & w/o Corruption & \multicolumn{4}{c}{72.5 $\star$} \\
 \cmidrule(r){1-5}\cmidrule{6-10}
$\epsilon=10$ & $\star$ & $\star$ & $\star$ & $\star$ &$\epsilon=0.1$ & 72.2 & $\star$ & 72.2 & $\star$ \\
$\epsilon=20$ & 93.6 & $\star$ & 87.6 & $\star$ &$\epsilon=0.2$ & 71.7 & 72.1 & 71.7 & $\star$\\
$\epsilon=50$ & 62.2 & 29.6 & 31.9 & $\star$ &$\epsilon=0.5$ & 71.0 & 70.5 & 69.9 & 71.3\\
$\epsilon=100$ & 13.5 & 13.2 & 11.8 & 92.5 &$\epsilon=1$ & 66.5 & 68.5 & 67.9 & 69.3\\
$\epsilon=200$ & 10.0 & 10.1 & 10.3 & 71.4&$\epsilon=2$ & 51.6 & 56.9 & 63.0 & 64.2 \\
$\epsilon=500$ & 10.0 & 10.9 & 9.8 & 30.7 &$\epsilon=5$ & 2.5 & 1.4 & 1.4 & 27.2 \\
\bottomrule
\end{tabular}
\caption{Results of corrupting different layers ($n=100$, $L_2$-norm) of ResNet-34 on ImageNet and CIFAR-10. $\star$ denotes original performance scores without parameter corruption and scores close to the original score (difference less than 0.1).}
\label{tab:results}
\end{table}

\begin{minipage}[H]{\textwidth}
\begin{minipage}[l]{0.6\textwidth}
\begin{table}[H]
\scriptsize
\setlength{\tabcolsep}{6pt}
\centering
\begin{tabular}{llcccccc}
\toprule
& & Layer1 & Layer2 & Layer3 & Layer4 & Layer5 & Layer6 \\
\midrule
 & w/o Corruption & \multicolumn{6}{c}{35.33 $\star$} \\
 \midrule
\multirow{5}{*}{Encoder} & $\epsilon=0.2$ & $\star$ & $\star$ & $\star$ & $\star$ & $\star$ & 34.88 \\
& $\epsilon=0.5$ & $\star$ & $\star$ & 29.89 & 35.06 & $\star$ & 32.06 \\
& $\epsilon=1$ & $\star$ & $\star$ & 9.68 & 33.54 & 35.04 & 1.76 \\
& $\epsilon=2$ & $\star$ & 27.23 & 0.53 & 2.79 & 32.85 & 0.05 \\
& $\epsilon=5$ & 33.21 & 0.44 & 0.07 & 0.16 & 9.28 & 0 \\
 \midrule
\multirow{5}{*}{Decoder} & $\epsilon=0.2$ & $\star$ & $\star$ & $\star$ & $\star$ & $\star$ & 34.40 \\
& $\epsilon=0.5$ & 35.17 & 34.97 & $\star$ & 35.21 & 35.07 & 31.02 \\
& $\epsilon=1$ & 34.99 & 34.71 & 35.17 & 34.53 & 34.37 & 21.90 \\
& $\epsilon=2$ & 33.86 & 33.08 & 32.09 & 31.00 & 32.00 & 1.97 \\
& $\epsilon=5$ & 7.96 & 0.11 & 0.86 & 1.72 & 3.18 & 0.02\\
\bottomrule
\end{tabular}
\caption{Results of corrupting different layers ($n=100$, $L_2$-norm) of Transformer on De-En. $\star$ denotes original performance scores without parameter attack and scores close to the original score (difference less than 0.1).}
\label{tab:results_layer}
\end{table}
\end{minipage}
\quad
\begin{minipage}[r]{0.35\textwidth}
\begin{figure}[H]
    \includegraphics[scale=.4]{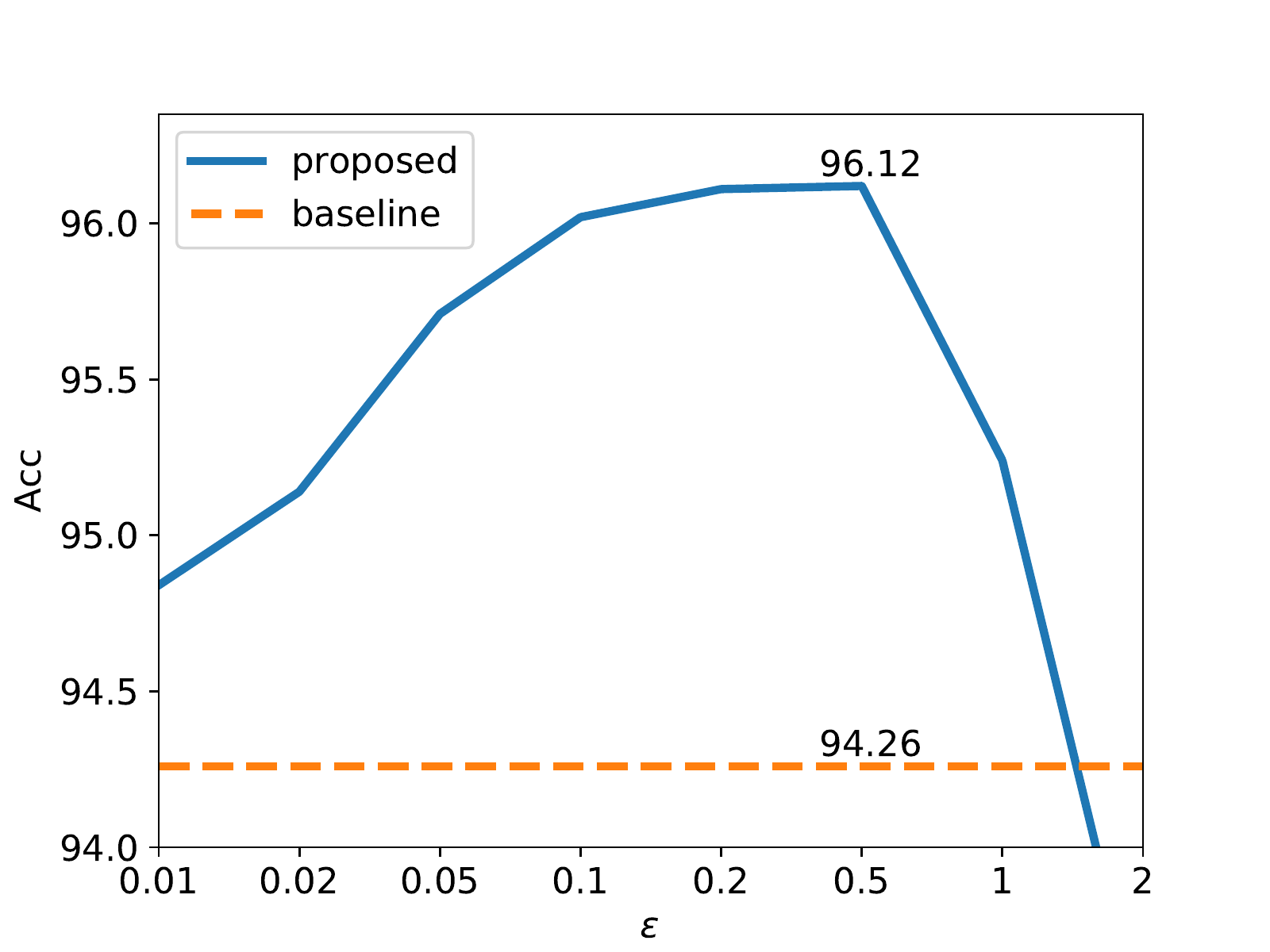}
    \caption{Results of the corruption-resistant training with different $\epsilon$ on CIFAR-10.}
    \label{fig:def}
\end{figure}
\end{minipage}
\end{minipage}

\subsubsection{Detailed Visualization of the Vulnerability of Different Components in ResNet-34}

In our paper, ResNet-34 is roughly divided into four layers. In this section, we report detailed visualization of the vulnerability of different components in ResNet-34 as shown in Figure~\ref{fig:resnet_details}. We can see that 1) Lower layers in ResNet-34 are more robust to parameter corruption; 2) Batch normalization in ResNet-34 is usually less robust to parameter corruption compared to its neighborhood components.

\clearpage
\begin{figure*}[ht]
\centering
\subcaptionbox{ResNet-34 (ImageNet).}{
\begin{minipage}[t]{0.48\linewidth}
\centering
\includegraphics[height=8 in, width=2.2 in]{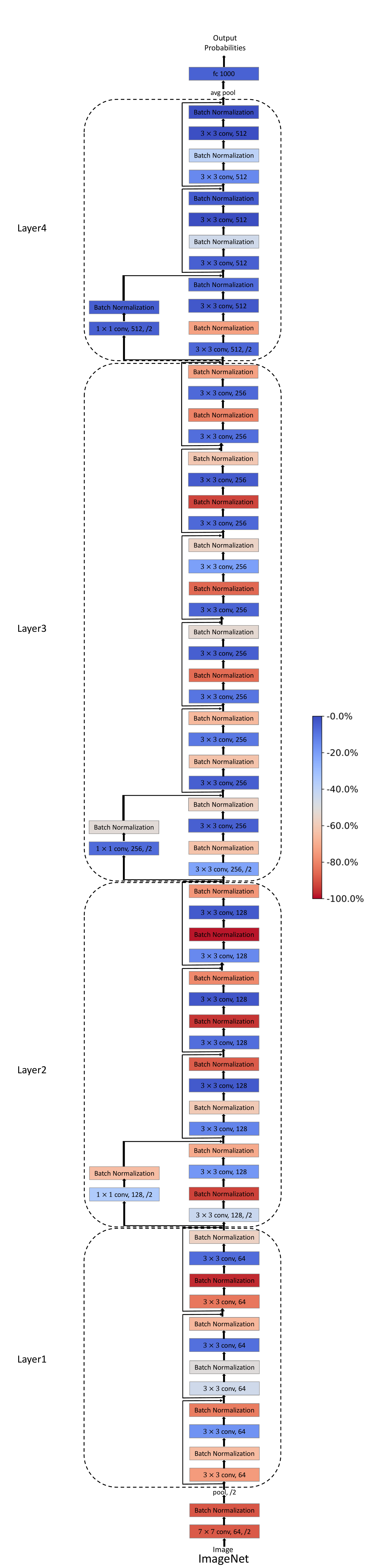}
\end{minipage}
}%
\subcaptionbox{ResNet-34 (CIFAR-10).}{
\begin{minipage}[t]{0.48\linewidth}
\centering
\includegraphics[height=8 in, width=2.2 in]{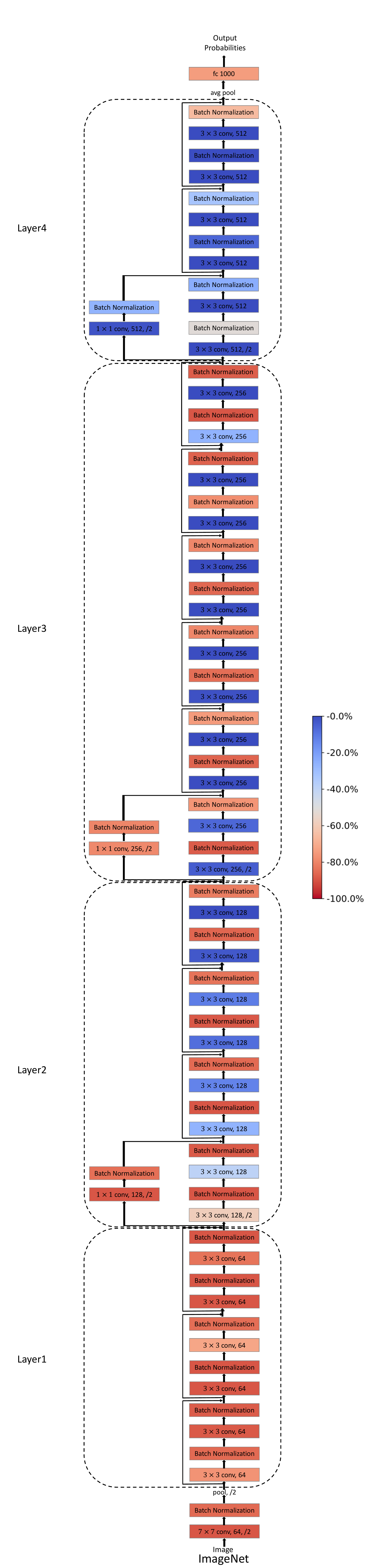}
\end{minipage}
}
\caption{Detailed visualization of the vulnerability of different components in ResNet-34 and Transformer ($n=100$, $L_{+\infty}$-norm). $\epsilon$ is set to $0.5$ for ResNet-34 (ImageNet) and $5$ for ResNet-34 (CIFAR-10). Warmer colors indicate significant performance degradation. We can see that 1) Lower layers in ResNet-34 are more robust to parameter corruption; 2) Batch normalization in ResNet-34 are usually less robust to
parameter corruption compared to its neighborhood components.}
\label{fig:resnet_details}
\end{figure*}

\end{document}